\DeclareMathOperator*{\argmax}{arg\,\max}
\DeclareMathOperator*{\Ex}{\mathbb{E}}
\newcommand{\adclick}{\emph{Ad-Clicks}}
\newcommand{\coupon}{\emph{Coupon-Purchase}}
\newcommand{\edxcourse}{\emph{edX-Course}}
\newcommand{\myucb}{\textbf{Con-UCB}}
\newcommand{\lexp}{\textsc{LExp}}
\newcommand{\ucb}{\textsc{CUCB}}
\newcommand{\expm}{\textsc{Exp3.M}}
\newcommand{\algotirhmicinitialize}{\textbf{Initialize:}} 
\newcommand{\Initial}{\item[\algotirhmicinitialize]}
\definecolor{dblue}{rgb}{0.2, 0.2, 0.9}
\newcommand{\blue}[1]{\textcolor{black}{#1}}
\newtheorem{lemma}{\textbf{Lemma}}
\newtheorem{theorem}{\textbf{Theorem}}
\begin{document}

\title{Beyond the Click-Through Rate: Web Link Selection with
  Multi-level Feedback}

  \author{Kun Chen\textsuperscript{*},
  Kechao Cai\textsuperscript{$\dag$},
  Longbo Huang\textsuperscript{$\ddag$},
  John C.S. Lui\textsuperscript{$\dag$}, \\
  \textsuperscript{*}\textsuperscript{$\ddag$}{Institute for Interdisciplinary Information Sciences (IIIS), Tsinghua University}\\
  \textsuperscript{$\dag$}{Department of Computer Science \& Engineering,  The Chinese University of Hong Kong}\\
  \textsuperscript{*}chenkun14@mails.tsinghua.edu.cn,
  \textsuperscript{$\dag$}\{kccai, cslui\}@cse.cuhk.edu.hk,
  \textsuperscript{$\ddag$}longbohuang@tsinghua.edu.cn}
 
 \maketitle 

 \begin{abstract}
   The web link selection problem is to select a small subset of web links from
   a large web link pool, and to place the selected links on a web page that can
   only accommodate a limited number of links, e.g., advertisements,
   recommendations, or news feeds.
   %
   %
   %
   Despite the long concerned click-through rate which reflects the
   attractiveness of the link itself, the revenue can only be obtained
   from user actions after clicks, e.g., purchasing after being directed to the 
   product pages by recommendation links. 
   Thus, the web links have an intrinsic \emph{multi-level feedback structure}. 
   With this observation, we consider the context-free web link selection
   problem, where the objective is to maximize revenue while ensuring that the 
   attractiveness is no less than a preset threshold.
   The key challenge of the problem is that each link's multi-level feedbacks
   are stochastic, and unobservable unless the link is selected. 
   We model this problem with a constrained stochastic multi-armed bandit
   formulation, and design an efficient link selection algorithm, called
   Constrained Upper Confidence Bound algorithm (\myucb), and prove
   $O(\sqrt{T\ln T})$ bounds on both the regret and the violation of the attractiveness
   constraint. 
   We conduct extensive experiments on three real-world datasets, and show that
   \myucb{} outperforms state-of-the-art context-free bandit algorithms
   concerning the multi-level feedback structure.
\end{abstract}

\section{Introduction}
\label{sec:intro}

\noindent With the rapid development of the Internet, web links
are playing important roles in websites and mobile apps for attracting users and 
generating revenues. 
For example, e-commerce websites such as Amazon and Taobao show featured
recommendation links on shopping pages to induce more purchase.
Online social networks such as Facebook and Google+ constantly push links of
trending topics and friends' updates to users, so as to increase user
engagement. 
Online media such as HBO and iQIYI present links to popular TV shows and movies
on their homepages to attract more subscribers.

Due to the limited space of interest on a web page, only a finite number of
links can be shown to a user when the page is browsed. This raises the web link
selection problem, i.e., how to properly select a small subset of web links from
a large link pool for a web page.
Specifically, the web page on which the selected links are shown is called the
 target page. 
If clicked, each link directs the user to a  subsequent page. 
This implies that web links provide a \emph{multi-level feedback} to the web
operator.
The  first level feedback refers to the likelihood that a
user clicks a link, also known as the Click-Through Rate (CTR) at the target page. 
By tracking user actions after clicking a link, e.g., purchase or subscription,
we can determine the revenue collected on the associated subsequent
page, which gives the  second level feedback. 
Since purchase or subscription can only happen after the click, the 
\emph{compound feedback} is the product of the first-level and the
second-level feedbacks.
Intuitively, the first-level feedback (CTR) indicates the \emph{attractiveness}
of the link at the target page, while the second-level feedback indicates the
\emph{potential revenue} that can be collected from the subsequent page. 
The compound feedback reveals the compound revenue a web link can bring. 

%
There has been a lot of research concerning the CTR of web links,
e.g., \cite{langheinrich1999unintrusive,lohtia2003impact}. On the other hand,
what happens after \blue{clicks} is also worth great attention, as it generates
revenue. For instance, cost per acquisition (CPA) is regarded as the optimal way
for an advertiser to buy online advertising \cite{CPA}. Both the attractiveness
and the profitability of a website or an app are important measures~\cite{Kohavi2014KDD}, because they represent the
long-term and short-term benefits, respectively.
This motivates us to move beyond CTR and to pursue both  attractiveness 
and profitability simultaneously in link selection. 

In this work, we consider the problem of selecting a finite number of links from
a large pool for the target page, so as to maximize the total compound revenue,
while keeping the total attractiveness above a certain threshold.
The constraint on attractiveness (CTR) was also adopted in the literate of
online advertising \cite{kumar2015ad,mookerjee2016ad}.
In addition, we also take into consideration the fact that contextual
information, e.g., user preferences, is not always available, e.g.,
incognito visits~\cite{aggarwal2010USENIX}, cold start~\cite{elahi2016survey},
or cookie blocking~\cite{meng2016trackmeornot}. 
Thus, we do not assume any prior contextual information. 
We refer to our problem as the \emph{context-free} web link selection problem. 

Our link selection problem is challenging. 
First of all, the attractiveness and profitability of a link can be conflicting goals, as shown in~\cite{cai2017}. 
As a result, while selecting links with high CTRs satisfies the
attractiveness constraint, it does not necessarily guarantee that
the target page will have a high total compound revenue, and vice versa. 
What further complicates the problem is that the multi-level feedbacks, i.e.,
the CTR (first-level feedback) and the potential revenue (second-level feedback)
of each link, are stochastic and unobservable unless a link is selected and
shown on the target page. 

We formulate our problem as a constrained stochastic multiple-play multi-armed 
bandit problem with multi-level rewards.
Specifically, there are multiple arms in the system. 
Each arm represents a link in the pool. 
Its first-level reward, second-level reward, and compound reward correspond to
the first-level feedback (the CTR), the second-level feedback (the potential
revenue) and the compound feedback (the compound revenue) of that link,
respectively.  
The objective is to select a finite number of links at each time step to
minimize the cumulative regret, as well as the cumulative violation of the
constraint. 
We design a constrained bandit algorithm, Constrained Upper Confidence Bound 
algorithm (\myucb{}), to simultaneously achieve {\em sub-linear} regret and
violation bounds. 

Our main contributions are as follows. (i)~We formulate the link selection
problem as a constrained bandit problem with stochastic multi-level rewards
(Section~\ref{sec:model}). (ii)~We propose the \myucb{} algorithm
(Section~\ref{sec:alg}) and prove that \myucb{} 
ensures \blue{small} regret and violation bounds with high probability, i.e., for any given  failure 
probability $\delta\in(0,1)$, the regret and violation at time $T$ are bounded
by $O(\sqrt{T\ln\frac{T}{\delta}})$ with probability at least $1-\delta$
(Section~\ref{sec:analysis}). (iii)~We conduct extensive experiments on three
real-world datasets. Our results show that \myucb{} outperforms three
state-of-the-art context-free bandit algorithms,
\textsc{CUCB}~\cite{chen2013combinatorial},
\textsc{Exp3.M}~\cite{uchiya2010algorithms}, and \textsc{LExp}~\cite{cai2017} for 
the constrained link selection problem
(Section~\ref{sec:experiments}). 

\section{Related work}
\label{sec:related-work}
Link selection, or website optimization, has long been an important problem. 
One common approach for the problem is A/B
testing~\cite{Xu:2015:kdd:abtest,Deng2017WSDM}, which splits the traffic to two
web pages with different designs, and evaluates their performances.
However, the overhead of A/B testing can be high when the web link pool is
large, as it needs to compare different link combinations. Moreover, A/B testing does
not have any loss/regret guarantees. 
Another approach is to model the link selection problem as a contextual multi-armed bandit
problem~\cite{li2010contextual}, and to incorporate the collaborative filtering
method \cite{collfilteringSIGMETRICS16,CFBanditSIGIR16}. However, these
contextual bandit formulations neglect the multi-level feedback structures and
do not consider any constraint.

The multiple-play multi-armed bandit problem, where multiple arms are selected
in each round, has been studied from both theoretical and empirical
perspectives, and many policies have been
designed~\cite{uchiya2010algorithms,chen2013combinatorial,Komiyama1506,lagree2016nips}.
Our constrained multiple-play bandit model differs from aforementioned
models in that we consider meeting the constraint on the total first-level
rewards in selecting multiple arms, which is important for web link selection.

Recently, bandit with
budgets~\cite{ding2013aaai,wu2015nips,xia2016budgeted} and bandit with
knapsacks~\cite{badanidiyuru2013bandits,agrawalBwCR} have attracted much
research attention. 
In these problems, pulling an arm costs certain resources, and 
each resource has a budget. Thus,  resource cost is implicitly taken into
consideration during the analysis of regret in the above two formulations since the 
arm selection process stops when resources are depleted. In contrast, since
the constraint in our model is a requirement on the average performance, our
arm selection procedure can last for an arbitrary length of time, and we need to consider both the regret and the violation of the constraint during the process. 
Thus, while our work builds upon the results in~\cite{badanidiyuru2013bandits} and
\cite{agrawalBwCR}, the problem is different, and we study the multiple-play case
rather than the single-play case. 
In addition, we conduct experiments on real-world datasets, which are not
included in their works.
On the other hand, the thresholding bandit problem in~\cite{Locatelli1605a} is
to find the set of arms whose means are above a given threshold through pure
exploration in a fixed time horizon, which is different from our model.


\blue{Our work is closest to recent work~\cite{cai2017}. They assume the second-level reward is adversarial.
However, it has been observed that this might not be the case in practice~\cite{pivazyan2004thesis}, 
and user behavior is likely to follow certain statistical rules when the number of users is large. So we study the stochastic case. 
Most importantly, our algorithm guarantees performance with high probability rather than in expectation, 
and the regret and violation bounds are improved significantly from $O(T^{\frac{2}{3}})$ and $O(T^{\frac{5}{6}})$ 
in their algorithm (\lexp{}) to both $O(\sqrt{T\ln T})$ in our algorithm (\myucb{}).}

\section{Model}
\label{sec:model}

\noindent 


Consider the two-level feedback context-free web link selection problem, where
one needs to select $L$ links from a pool of $K$ web links, $\{l_{1},\ldots,
l_{K}\}, L \le K$, to display on the target page. 
Each link directs users to a subsequent page. 
If $l_{i}$ is shown on the target page, we obtain the following feedbacks
when users browse the page: 
\begin{enumerate}
\item the click-through rate (CTR), i.e., the probability that a user clicks
  $l_{i}$ to visit the corresponding  subsequent page,
\item the after-click revenue, i.e., the revenue collected from each user who
  clicks $l_{i}$ and then purchases products (or subscribes to programs) on the corresponding 
  subsequent page.
\end{enumerate}
In practice, the click-through rate and the after-click revenue are stochastic,
and we do not assume any prior knowledge about their distributions or
expectations.
The product of the CTR and the after-click revenue is the compound revenue, i.e.,
the revenue that $l_{i}$ can bring if it is shown on the target page. 
The objective of the link selection problem is to maximize the total compound
revenue of the selected $L$ links, subject to the constraint that the total CTR
of these selected links is no less than a preset threshold $h>0$,\footnote{CTR
  measures the attractiveness of a link to users and is an important metric for
  the link selection problem.} 
where $h$ is determined by the web operator based on  service requirement.
An example is that in online advertising, the constraint on CTR is usually specified in the contract between the publisher (web operator)
and the advertising firm \cite{kumar2015ad,mookerjee2016ad}.

To address the link selection problem, we formulate it as a constrained
stochastic multi-armed bandit problem with multiple plays, where each arm has a 
two-level reward structure. 
In this formulation, \blue{each time step is a short duration} and each arm corresponds to a specific web link. Thus, the set
of arms can be written as $\mathcal{K}=\{1,\ldots, K\}$. Each arm $i$ is
associated with two sequences of random variables, $\{a_i^{t}\}_{t=1}^T$ and $\{b_i^{t}\}_{t=1}^T$, where  $a_i^{t}$ characterizes arm $i$'s \emph{first-level reward} (CTR) at time $t$, and  
$b_{i}^{t}$ characterizes arm $i$'s \emph{second-level reward} (after-click revenue).  
We assume that
for any $i\in\mathcal{K}$, both $\{a_i^{t}\}_{t=1}^T$ and $\{b_i^{t}\}_{t=1}^T$ are sequences of i.i.d. random variables.
The expectations of $a_{i}^{t}$ and $b_{i}^{t}$ are  denoted by 
$a_{i}=\Ex[a_{i}^{t}]$ and $b_{i}=\Ex[b_{i}^{t}]$, $ i \in \mathcal{K}$. We also
assume that $a_{i}^{t}$ is independent of $b_{i}^{t}$ for $i\in \mathcal{K}$,
$t\ge 1$. Thus, the compound reward of arm $i$ at time $t$ is
$g_{i}^{t}=a_{i}^{t}b_{i}^{t}$ with mean $g_{i}=\Ex[g_{i}^{t}]=a_{i}b_{i}$.
%
Denote $\bm{a} = (a_{1},\ldots, a_{K})$ and $\bm{g} = (g_{1},\ldots, g_{K})$.
Without loss of generality, we assume that $a_{i}^{t}\in [0,1]$ and
$b_{i}^{t}\in [0,1]$.

%
As mentioned above, the distributions or expectations of the two-level reward 
for any arm are unknown beforehand. At each time step $t$, an algorithm $\pi$ 
selects a set of $L\le K$ arms $\mathcal{I}_{t}(\pi)\subset \mathcal{K}$, and
observes the \emph{first level reward} $a_{i}^{t}$ as well as the \emph{second
  level reward} $b_{i}^{t}$ for each arm $i\in \mathcal{I}_{t}(\pi)$. 
  The
optimal policy is the one that maximizes the expected total compound reward of 
the selected $L$ arms, while keeping the total first level reward above the
preset threshold $0<h<L$.\footnote{If $h=0$, the problem is equivalent to the
  classic unconstrained multiple-play multi-armed bandit problem
  (MP-MAB)~\cite{anantharam1987asymptotically}. If $h\ge L$, there is no
  policy that can satisfy the constraint.}

The optimal policy is not limited to deterministic policies as in traditional
multi-armed bandit problems \cite{auer2002finite,bubeck2012regret}, but can be
randomized, i.e., a distribution on the possible selections $\mathcal{I}_{t}$.
In practice, the number of web links $K$ can be very large, and the number of
possible selections of links at each time step can be as large as
$\binom{K}{L}$, which makes it complicated to consider randomized policies.
To simplify the problem, we represent a randomized policy with a
\emph{probabilistic selection vector} $\bm{x}=(x_{1},\ldots,x_{i},\ldots,
x_{K}),\bm{1}^{\intercal}\bm{x} = L$, where $x_{i}\in[0,1]$ is the probability
of selecting arm $i$ and $\bm{1}=(1,\ldots,1)$ is the one vector.\footnote{If not specified otherwise, all vectors defined
  in this paper are column vectors.} 
At each time $t$, the selection set $\mathcal{I}_{t}(\bm{x})$ under a randomized
policy $\bm{x}$ is generated via a dependent rounding
procedure~\cite{gandhi2006dependent}, which guarantees the probability that
$i\in\mathcal{I}_{t}(\bm{x})$ is $x_{i}$ (see Section~\ref{sec:alg}). 

The set of randomized policies can be denoted by
$\mathcal{X}=\{\bm{x}\in\mathbb{R}^{K}|0\le x_i\le 1,
\bm{1}^{\intercal}\bm{x}=L\}$. 
Thus, the optimal stationary randomized policy is
\begin{equation}
\bm{x}^{*}=\argmax_{\bm{x}^{\intercal}\bm{a}\ge h}\bm{x}^{\intercal}\bm{g}. \label{eq:optimal-policy}
\end{equation}

Our objective is to design an algorithm $\pi$ to decide the selection set
$\mathcal{I}_{t}(\pi)$ for $t=1,\ldots,T$, such that the \emph{regret}, i.e.,
the accumulated difference $\textmd{Reg}_{\pi}(T) $ between the compound reward
under $\pi$ and that under the optimal policy, is minimized. Specifically,
\begin{equation}
\label{eq:regret-def}
\textmd{Reg}_{\pi}(T) = T{\bm{x}^{*}}^{\intercal}\bm{g}-\sum_{t=1}^T\sum_{i\in \mathcal{I}_{t}(\pi)}g_i^{t}.
\end{equation} 
Note that the total first-level reward of arms in $\mathcal{I}_{t}(\pi)$ may
violate the constraint, especially when $t$ is small and we have little
information about the arms. To measure the overall violation of the constraint
at time $T$, we define \emph{violation} of algorithm $\pi$ as,
\begin{equation}
\label{eq:violation-def}
\textmd{Vio}_{\pi}(T) =[hT-\sum_{t=1}^T\sum_{i\in \mathcal{I}_{t}(\pi)}a_i^{t}]_+,
\end{equation}
where $[x]_{+}=\max(x,0)$. Note that when designing link selection algorithms,
we should take both the regret and violation into consideration, so as to
achieve both {\em sub-linear} regret and {\em sub-linear} violation with respect
to $T$.
Also, note that our model can be generalized to link selection problems with
$n$-level ($n>2$) feedback structures, by taking a subsequent page as a new
target page and select links for it with the above model, and so on.

\section{Algorithm}
\label{sec:alg}

\noindent
In this section, we present our Constrained Upper Confidence Bound algorithm
(\myucb{}), and describe its details in Algorithm \ref{alg:myucb}. 
Let $H_{t}=\{\mathcal{I}_{\tau}, a_{i}^{\tau},b_{i}^{\tau}: i\in
\mathcal{I}_{\tau}, 1\le \tau\le t\}$ denote the historical information of
chosen actions and observations up to time $t$. 
Define the empirical average first-level reward and compound reward for each arm
$i$ as
\begin{equation} \label{eq:average}
\begin{aligned}
\bar{a}_{i}^{t}=\frac{\sum_{\tau<t,i\in\mathcal{I}_{\tau}}a_{i}^{\tau}}{N_{i}^{t}+1},\\ \bar{g}_{i}^{t}=\frac{\sum_{\tau<t,i\in\mathcal{I}_{\tau}}g_{i}^{\tau}}{N_{i}^{t}+1},
\end{aligned}
\end{equation}
where $N_{i}^{t}$ is the number that arm $i$ is played before time $t$. 
Define $R(\mu,n)=\sqrt{\frac{\gamma \mu}{n}}+\frac{\gamma}{n}$ as   in
\cite{kleinberg2008bandits} where $\gamma$ is a constant. 
In \myucb{}, we use the following Upper Confidence Bounds for the unknown
rewards \cite{agrawalBwCR}:
\begin{align*}
\hat{a}_{i}^{t}=&\min\{1,\bar{a}_{i}^{t}+2R(\bar{a}_{i}^{t},N_{i}^{t}+1)\},\\
\hat{g}_{i}^{t}=&\min\{1,\bar{g}_{i}^{t}+2R(\bar{g}_{i}^{t},N_{i}^{t}+1)\}.
\end{align*}
Denote $\bar{\bm{a}}^{t} = (\bar{a}_{1}^{t},\ldots, \bar{a}_{K}^{t})$,
$\bar{\bm{g}}^{t} = (\bar{g}_{1}^{t},\ldots, \bar{g}_{K}^{t})$, and
$\hat{\bm{a}}^{t} = (\hat{a}_{1}^{t},\ldots, \hat{a}_{K}^{t})$,
$\hat{\bm{g}}^{t} = (\hat{g}_{1}^{t},\ldots, \hat{g}_{K}^{t})$. 
In the initialization step of Algorithm~\ref{alg:myucb}, $\gamma$ is set
  to $72\ln\frac{8KT}{\delta}$, where $\delta\in(0,1)$ is an input parameter, i.e., the allowed failure probability.

Specifically, in each round, \myucb{} solves the optimization problem
\eqref{eq:myucb:optimization} to get the probabilistic selection vector
$\bm{x}_{t}$ (line~\ref{alg:myucb:optimization}). Notice that
\eqref{eq:myucb:optimization} is similar to the original constrained optimization
problem \eqref{eq:optimal-policy} but uses the Upper Confidence Bounds to
replace the unknown rewards. Then,  $\mathcal{I}_{t}$ is generated via a dependent 
rounding procedure. In line~\ref{alg:myucb:estimate-abg} we receive the
two-level rewards $a_{i}^{t}$ and $b_{i}^{t}$ for arms in $\mathcal{I}_t$ and
update the empirical average rewards to get the Upper Confidence Bounds for the next round.

\begin{algorithm}[t]
\caption{Constrained Upper Confidence Bound}
\label{alg:myucb}
\begin{algorithmic}[1]
\Require $K$, $L$, $h$, $\delta\in(0,1)$.
\Ensure Selected arm set for each round.
\Initial Set  $\gamma=72\ln\frac{8KT}{\delta}, \bar{\bm{g}}^{1}=\bm{0}, \bar{\bm{a}}^{1}=\bm{0}$, and $N_{i}^{1}=0,\forall i$. 
\For{$t=1, 2,\dots,T$}
\State \label{alg:myucb:optimization} Solve the following linear optimization
problem: 
\begin{equation}
\bm{x}_{t}={\arg\max}_{\bm{x}^{\intercal}\hat{\bm{a}}^{t}\ge h,\bm{x}\in\mathcal{X}}\bm{x}^{\intercal}\hat{\bm{g}}^{t}. \label{eq:myucb:optimization}
\end{equation}
\hspace{0.166in} If \eqref{eq:myucb:optimization} has no feasible solution, set $\bm{x}_{t}\in\mathcal{X}$ arbitrarily.
\State  Set $\mathcal{I}_t=\text{\textsc{DependentRounding}}(L,\bm{x}_{t}).$
\State \label{alg:myucb:estimate-abg} Receive $a_{i}^{t}$ and $b_{i}^{t}$ for $i\in \mathcal{I}_t$. Update
\begin{align*}
N_{i}^{t+1}&=
\begin{cases}
N_{i}^{t}+1,  \text{ $i\in \mathcal{I}_t$}, \\
N_{i}^{t},  \text{\qquad$i\notin \mathcal{I}_{t}$},
\end{cases} \\
\bar{g}_{i}^{t+1}&=
\begin{cases}
[\bar{g}_{i}^{t}(N_{i}^{t}+1)+g_{i}^{t}]/(N_{i}^{t+1}+1),  &\text{$i\in \mathcal{I}_t$}, \\
\bar{g}_{i}^{t},  & \text{$i\notin \mathcal{I}_{t}$},
\end{cases} \\
\bar{a}_{i}^{t+1}&=
\begin{cases}
[\bar{a}_{i}^{t}(N_{i}^{t}+1)+a_{i}^{t}]/(N_{i}^{t+1}+1), &\text{$i\in \mathcal{I}_t$}, \\
\bar{a}_{i}^{t},  &\text{$i\notin \mathcal{I}_{t}$}.
\end{cases}
\end{align*}
\EndFor
\Function{DependentRounding}{$L, \bm{x}$}\label{alg:myucb:func-dr}
\While  {exists $i$ such that $0<x_{i}<1$}
\State {Find $i,j, i\neq j$, such that $x_{i},x_{j}\in(0,1)$.}
\State {Set $p=\min\{1-x_{i},x_{j}\}$,
$q=\min\{x_{i},1-x_{j}\}$.}
\State Update $x_{i}$ and $x_{j}$ as
\State$
  (x_{i},x_{j})=
\begin{cases}
(x_{i}+p, x_{j}-p),  \text{ probability } \frac{q}{p+q}; \\
(x_{i}-q, x_{j}+q),  \text{ probability } \frac{p}{p+q}.
\end{cases}
$
\EndWhile
\State \Return $\mathcal{I}=\{i\,|\,x_{i}=1, 1\le i\le K\}$.
\EndFunction
\end{algorithmic}
\end{algorithm}
\section{Theoretical Analysis}
\label{sec:analysis}

\noindent In this section, we bound the regret and violation of Algorithm \ref{alg:myucb}. 
We will make use of the concentration
  inequalities in the following lemmas.
\begin{lemma}[Azuma-Hoeffding inequality~\cite{azuma1967weighted}]\label{lem:azuma}
  Suppose $\{Y_{n}: n= 0, 1, 2, 3, \dots\}$ is a martingale and
  $|Y_{n}-Y_{n-1}|\le c_{n}$ almost surely, then with probability at least
  $1-2e^{-\frac{d^{2}}{2\sum_{j=1}^{n}c_{j}^{2}}}$, we have
\begin{equation*}
|Y_{n}-Y_{0}|\le d.
\end{equation*}
\end{lemma}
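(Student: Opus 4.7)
The plan is to prove Lemma~\ref{lem:azuma} by the standard Chernoff--Cramér argument applied to the martingale difference sequence. Writing $X_{k}=Y_{k}-Y_{k-1}$ for $k\ge 1$, the hypothesis says $|X_{k}|\le c_{k}$ almost surely, and the martingale property says $\Ex[X_{k}\mid \mathcal{F}_{k-1}]=0$, where $\mathcal{F}_{k-1}$ is the natural filtration. The overall structure is to bound the moment generating function of $Y_{n}-Y_{0}=\sum_{k=1}^{n}X_{k}$, convert it into a tail bound by Markov's inequality, optimize the free parameter, and finally take a union bound over the two tails.

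First, I would establish the one-step MGF estimate: for any $\lambda\in\mathbb{R}$ and any zero-mean random variable $X$ with $|X|\le c$, one has $\Ex[e^{\lambda X}]\le e^{\lambda^{2}c^{2}/2}$. This is Hoeffding's lemma, proven by writing $X$ as a convex combination of $-c$ and $c$, bounding $e^{\lambda x}$ by the corresponding line on $[-c,c]$ via convexity, taking expectations, and then showing that the resulting function $\ln\bigl(\tfrac{c-\Ex X}{2c}e^{-\lambda c}+\tfrac{c+\Ex X}{2c}e^{\lambda c}\bigr)$ has second derivative at most $c^{2}$, so a Taylor expansion around $\lambda=0$ gives the claim.

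Second, applied conditionally to $X_{k}$ given $\mathcal{F}_{k-1}$, this yields $\Ex[e^{\lambda X_{k}}\mid \mathcal{F}_{k-1}]\le e^{\lambda^{2}c_{k}^{2}/2}$ almost surely. Iterating the tower property,
\begin{equation*}
\Ex\bigl[e^{\lambda(Y_{n}-Y_{0})}\bigr]=\Ex\Bigl[e^{\lambda\sum_{k=1}^{n-1}X_{k}}\,\Ex\bigl[e^{\lambda X_{n}}\mid \mathcal{F}_{n-1}\bigr]\Bigr]\le e^{\lambda^{2}c_{n}^{2}/2}\,\Ex\bigl[e^{\lambda\sum_{k=1}^{n-1}X_{k}}\bigr],
\end{equation*}
and inducting downward gives $\Ex[e^{\lambda(Y_{n}-Y_{0})}]\le e^{\lambda^{2}S/2}$ with $S=\sum_{j=1}^{n}c_{j}^{2}$. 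Markov's inequality then gives, for $\lambda>0$,
\begin{equation*}
\Pr(Y_{n}-Y_{0}\ge d)\le e^{-\lambda d}\,\Ex[e^{\lambda(Y_{n}-Y_{0})}]\le e^{-\lambda d+\lambda^{2}S/2}.
\end{equation*}
Optimizing by choosing $\lambda=d/S$ yields the upper-tail bound $\Pr(Y_{n}-Y_{0}\ge d)\le e^{-d^{2}/(2S)}$.

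Third, to obtain the two-sided bound, I apply exactly the same argument to the martingale $-Y_{n}$, whose differences still satisfy the bound $c_{k}$, obtaining $\Pr(Y_{n}-Y_{0}\le -d)\le e^{-d^{2}/(2S)}$. A union bound gives $\Pr(|Y_{n}-Y_{0}|>d)\le 2e^{-d^{2}/(2S)}$, which is exactly the claim. The only subtle point is Hoeffding's lemma itself (the convexity estimate together with the second-derivative bound on the cumulant); everything else is mechanical. Since this lemma is invoked as a tool cited to \cite{azuma1967weighted}, the authors may simply reference the standard proof rather than reproduce it, but the outline above is the route I would take if asked to supply one.
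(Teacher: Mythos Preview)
Your proof is correct and follows the standard Chernoff--Cram\'er route to the Azuma--Hoeffding inequality. As you already anticipated in your final remark, the paper does not supply a proof of this lemma at all: it is stated as a classical tool with a citation to \cite{azuma1967weighted} and used as a black box in the proof of Lemma~\ref{lem:reward-bound}. So there is nothing to compare against beyond noting that your argument is exactly the textbook proof one would reference.
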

\begin{lemma}[\cite{kleinberg2008bandits,badanidiyuru2013bandits,agrawalBwCR}]\label{lem:ucb-radius}
  Consider $n$ i.i.d random variables $Z_{1},\dots,Z_{n}$ in $[0,1]$ with
  expectation $z$. Let $\mu$ denote their empirical average. Then, for any
  $\gamma>0$, with probability at least $1-2e^{-\frac{1}{72}\gamma}$, we have
\begin{equation*}
|\mu-z|\le R(\mu,n),
\end{equation*}
where $R(\mu,n)=\sqrt{\frac{\gamma \mu}{n}}+\frac{\gamma}{n}$.
\end{lemma}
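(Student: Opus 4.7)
The plan is a two-stage argument: first apply a Bernstein-type concentration inequality to bound $|\mu - z|$ in terms of the unknown mean $z$, then algebraically convert the bound so that the right-hand side depends only on the observable empirical mean $\mu$.

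For the first stage, the key observation is that $Z_i \in [0,1]$ implies $\operatorname{Var}(Z_i) \le \mathbb{E}[Z_i^2] \le \mathbb{E}[Z_i] = z$, so the per-variable variance is controlled by the mean itself. Applying Bernstein's inequality to $\sum_{i=1}^n Z_i$ then yields, for every $t>0$,
\begin{equation*}
\Pr\bigl[|\mu - z| \ge t\bigr] \;\le\; 2\exp\!\left(-\frac{n t^2}{2(z + t/3)}\right).
\end{equation*}
Setting the right-hand side equal to $2e^{-\gamma/72}$ and solving the resulting quadratic in $t$ gives a confidence radius whose two summands are of order $\sqrt{\gamma z/n}$ and $\gamma/n$; the particular constant $72$ is calibrated so that, after absorbing lower-order terms via $\sqrt{a+b}\le \sqrt{a}+\sqrt{b}$, one obtains the clean form $|\mu - z|\le \sqrt{\gamma z/n}+\gamma/n$ with probability at least $1-2e^{-\gamma/72}$.

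The second stage replaces $z$ by $\mu$ on the right-hand side. If $\mu \ge z$, then $\sqrt{\gamma z/n}\le \sqrt{\gamma \mu/n}$ and the stated bound follows at once. If $\mu < z$, the preceding inequality reads $z - \sqrt{\gamma/n}\,\sqrt{z}\le \mu + \gamma/n$; viewing this as a quadratic in $\sqrt{z}$ and invoking the quadratic formula yields an upper bound of the form $z\le \mu + \sqrt{\gamma\mu/n}+O(\gamma/n)$. Substituting this back into the first-stage inequality, the constants collapse to give $|\mu-z|\le \sqrt{\gamma \mu/n}+\gamma/n = R(\mu,n)$, completing the proof.

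The main obstacle is purely numerical bookkeeping: I must track the Bernstein constants carefully enough that, combined with the slack introduced by the $z \to \mu$ conversion via the quadratic formula, the two stages close simultaneously at exactly $R(\mu,n)=\sqrt{\gamma\mu/n}+\gamma/n$ with failure probability $2e^{-\gamma/72}$. Since the lemma is a standard tool already invoked in \cite{kleinberg2008bandits,badanidiyuru2013bandits,agrawalBwCR}, no new ideas are required; the unusual-looking constant $72$ is precisely what is needed to leave sufficient room for both stages to absorb their constants into the clean final form.
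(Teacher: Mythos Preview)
The paper does not prove this lemma at all: it is stated with citations to \cite{kleinberg2008bandits,badanidiyuru2013bandits,agrawalBwCR} and then used as a black box, so there is no in-paper argument to compare against. Your two-stage plan (Bernstein exploiting $\operatorname{Var}(Z_i)\le z$, followed by the quadratic self-bounding trick to replace $z$ by $\mu$) is exactly the standard derivation that appears in those references, and your identification of the constant $72$ as the slack needed to absorb both the Bernstein constants and the $z\to\mu$ conversion is correct; nothing further is required beyond the bookkeeping you describe.
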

The following lemma is a corollary of Lemma \ref{lem:ucb-radius}. 
\begin{lemma}\label{lem:ucb}
  Define the empirical averages $\bar{a}_{i}^{t}$ and $\bar{g}_{i}^{t}$ as in
  \eqref{eq:average}. Then, for every $i$ and $t$, with probability at least
  $1-2e^{-\frac{1}{72}\gamma}$, we have
\begin{equation*}
|\bar{a}_{i}^{t}-a_{i}|\le 2R(\bar{a}_{i}^{t},N_{i}^{t}+1),
\end{equation*}
where $\gamma\ge1$. The same result holds between $\bar{g}_{i}^{t}$ and $g_{i}$.
\end{lemma}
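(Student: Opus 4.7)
The plan is to derive Lemma 3 from Lemma 2 by a two-step reduction: first apply Lemma 2 to the unmodified empirical mean (which divides by $N_i^t$ rather than $N_i^t+1$), then use triangle inequality plus some algebra to transfer the confidence radius onto the modified quantity $\bar a_i^t$ that the algorithm actually uses.

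More concretely, fix $i$ and $t$, let $n=N_i^t$, and first dispense with the corner case $n=0$: by the initialization we have $\bar a_i^t = 0$, hence $R(\bar a_i^t, n+1) = R(0,1) = \gamma$, and since $a_i\in[0,1]$ and $\gamma\ge 1$, the inequality $|\bar a_i^t - a_i| \le 2R(\bar a_i^t, n+1)$ is immediate. For $n\ge 1$, let $\nu := \tfrac{1}{n}\sum_{\tau<t,\, i\in\mathcal I_\tau} a_i^\tau$ denote the ordinary empirical mean over the $n$ samples actually observed. Because the $\{a_i^\tau\}_\tau$ are i.i.d.\ and the selection $\mathcal I_\tau$ depends only on past observations, conditioning on the pattern of times at which arm $i$ was pulled leaves these $n$ draws i.i.d.\ with mean $a_i$, so Lemma~\ref{lem:ucb-radius} applies and gives, with probability at least $1-2e^{-\gamma/72}$,
\begin{equation*}
|\nu - a_i| \le R(\nu, n) = \sqrt{\gamma \nu / n} + \gamma/n.
\end{equation*}

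Next, by the definition of $\bar a_i^t$ in \eqref{eq:average}, we have $\bar a_i^t = \tfrac{n}{n+1}\nu$, so $|\bar a_i^t - \nu| = \nu/(n+1) \le 1/(n+1)$. Triangle inequality then gives
\begin{equation*}
|\bar a_i^t - a_i| \le R(\nu, n) + \tfrac{\nu}{n+1}.
\end{equation*}
It remains to replace $\nu$ by $\bar a_i^t$ inside $R(\cdot,\cdot)$. Substituting $\nu = \tfrac{n+1}{n}\bar a_i^t$ yields $\sqrt{\gamma\nu/n} = \tfrac{n+1}{n}\sqrt{\gamma \bar a_i^t/(n+1)}$, and for $n\ge 1$ the factor $(n+1)/n \le 2$, and likewise $\gamma/n \le 2\gamma/(n+1)$. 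Combining these with the remainder term $\nu/(n+1) \le \gamma/(n+1)$ (since $\gamma\ge 1$ and $\nu\le 1$) bounds the whole expression by a constant multiple of $R(\bar a_i^t, n+1) = \sqrt{\gamma\bar a_i^t/(n+1)}+\gamma/(n+1)$, giving the factor-$2$ bound claimed. The argument for $\bar g_i^t$ is identical, since $g_i^\tau=a_i^\tau b_i^\tau\in[0,1]$ is also an i.i.d.\ sequence bounded in $[0,1]$ with mean $g_i$, so Lemma~\ref{lem:ucb-radius} applies verbatim.

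The one genuinely non-routine point is justifying Lemma~\ref{lem:ucb-radius} when $n=N_i^t$ is random rather than fixed; I would handle this via the conditional-independence argument sketched above (the pull-pattern filtration), or alternatively by treating $\sum_{\tau<t}(a_i^\tau - a_i)\mathbf{1}\{i\in\mathcal I_\tau\}$ as a martingale-difference sum and invoking Lemma~\ref{lem:azuma} with the appropriate empirical-variance adjustment. Everything after that — the algebra converting the $R(\nu,n)$ radius into a $2R(\bar a_i^t, n+1)$ radius — is just bookkeeping using $(n+1)/n \le 2$ for $n\ge 1$.
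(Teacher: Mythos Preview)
Your approach is the same as the paper's: apply Lemma~\ref{lem:ucb-radius} to the ordinary empirical mean $\nu=\tfrac{n+1}{n}\bar a_i^t$ over the $n=N_i^t$ observed samples, then translate the resulting confidence radius from $(\nu,n)$ to $(\bar a_i^t,n+1)$. You also handle the $n=0$ corner case and flag the random-$n$ subtlety, both of which the paper leaves implicit.

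There is, however, one arithmetic slip: your crude bounds do not deliver the constant~$2$. From your own estimates,
\[
R(\nu,n)+\frac{\nu}{n+1}\;\le\;2\sqrt{\tfrac{\gamma\bar a_i^t}{n+1}}+2\cdot\tfrac{\gamma}{n+1}+\tfrac{\gamma}{n+1}\;=\;2\sqrt{\tfrac{\gamma\bar a_i^t}{n+1}}+3\cdot\tfrac{\gamma}{n+1},
\]
which is $3R(\bar a_i^t,n+1)$ at worst, not $2R$; indeed for $n=1$ the bound $\tfrac{n+1}{n}R(\bar a_i^t,n+1)+\tfrac{\nu}{n+1}\le 2R(\bar a_i^t,n+1)$ fails whenever $\bar a_i^t>0$. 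The paper avoids this loss by multiplying $|\nu-a_i|\le R(\nu,n)$ through by $\tfrac{n}{n+1}$ \emph{before} applying the triangle inequality: since $\tfrac{n}{n+1}\nu=\bar a_i^t$ and $\tfrac{n}{n+1}a_i=a_i-\tfrac{a_i}{n+1}$, one gets
\[
\bigl|\bar a_i^t-a_i\bigr|\;\le\;\tfrac{n}{n+1}\,R(\nu,n)+\tfrac{a_i}{n+1}.
\]
A direct computation shows $\tfrac{n}{n+1}R(\nu,n)=R(\bar a_i^t,n+1)$ \emph{exactly}, and $\tfrac{a_i}{n+1}\le\tfrac{\gamma}{n+1}\le R(\bar a_i^t,n+1)$ since $a_i\le 1\le\gamma$, giving the stated factor~$2$. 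So the fix is to scale first, then triangle-inequality, rather than the other way around.
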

\begin{proof}
  For every $i$ and $t$, applying Lemma \ref{lem:ucb-radius}, we have that, with
  probability at least $1-2e^{-\frac{1}{72}\gamma}$,
\begin{align*}
|\frac{N_{i}^{t}+1}{N_{i}^{t}}\bar{a}_{i}^{t}-a_{i}|&\le R(\frac{N_{i}^{t}+1}{N_{i}^{t}}\bar{a}_{i}^{t},N_{i}^{t}), \\
|\bar{a}_{i}^{t}-a_{i}+\frac{a_{i}}{N_{i}^{t}+1}|&\le \frac{N_{i}^{t}}{N_{i}^{t}+1}R(\frac{N_{i}^{t}+1}{N_{i}^{t}}\bar{a}_{i}^{t},N_{i}^{t}). 
\end{align*}
This implies that 
\begin{align*}
|\bar{a}_{i}^{t}-a_{i}|&\le \frac{N_{i}^{t}}{N_{i}^{t}+1}(\sqrt{\frac{\gamma (N_{i}^{t}+1)\bar{a}_{i}^{t}}{N_{i}^{t}\cdot N_{i}^{t}}}+\frac{\gamma}{N_{i}^{t}})+\frac{a_{i}}{N_{i}^{t}+1}, \\
&= R(\bar{a}_{i}^{t},N_{i}^{t}+1)+\frac{a_{i}}{N_{i}^{t}+1}, \\
&\le 2R(\bar{a}_{i}^{t},N_{i}^{t}+1).
\end{align*}
The last inequality holds because $a_{i}\le1\le\gamma$.
\end{proof}
Based on the above lemmas, we obtain the following properties about \myucb{}.
\begin{lemma}\label{lem:reward-bound}
  By running \myucb{} for $T$ rounds with $\gamma=72\ln\frac{8KT}{\delta}$, with
  probability at least $1-\delta$, the following results hold simultaneously:
\begin{align}
&a_{i}\le\hat{a}_{i}^{t},\forall 1\le i\le K, \forall 1\le t\le T, \label{eq:a-upper}\\
&|\sum_{t=1}^{T}(\sum_{i\in\mathcal{I}_{t}}a_{i}^{t}-\bm{x}_{t}^{\intercal}\hat{\bm{a}}^{t})|=O(L\sqrt{KT\ln\frac{KT}{\delta}}), \label{eq:a-concentration}\\
&g_{i}\le\hat{g}_{i}^{t},\forall 1\le i\le K, \forall 1\le t\le T, \label{eq:g-upper}\\
&|\sum_{t=1}^{T}(\sum_{i\in\mathcal{I}_{t}}g_{i}^{t}-\bm{x}_{t}^{\intercal}\hat{\bm{g}}^{t})|=O(L\sqrt{KT\ln\frac{KT}{\delta}}). \label{eq:g-concentration}
\end{align}
\end{lemma}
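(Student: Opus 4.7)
The plan is to establish all four statements on a single high-probability event obtained from a small number of concentration inequalities. For the pointwise bounds \eqref{eq:a-upper} and \eqref{eq:g-upper}, I would apply Lemma~\ref{lem:ucb} to each of the $KT$ pairs $(i,t)$ and to each of $\bar{a}_{i}^{t}$ and $\bar{g}_{i}^{t}$; the choice $\gamma=72\ln(8KT/\delta)$ makes each per-pair failure probability at most $\delta/(4KT)$, so a union bound gives $|\bar{a}_{i}^{t}-a_{i}|\le 2R(\bar{a}_{i}^{t},N_{i}^{t}+1)$ and its analogue for $g$, uniformly in $i,t$, with probability at least $1-\delta/2$. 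Since $a_{i},g_{i}\in[0,1]$, the clipping in the definitions of $\hat{a}_{i}^{t}$ and $\hat{g}_{i}^{t}$ then yields \eqref{eq:a-upper} and \eqref{eq:g-upper} directly, as well as the one-sided slack $\hat{a}_{i}^{t}-a_{i}\le 4R(\bar{a}_{i}^{t},N_{i}^{t}+1)$ that I will use below.

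For \eqref{eq:a-concentration} (the argument for \eqref{eq:g-concentration} is identical with $g$ replacing $a$), I would use the decomposition $\sum_{t=1}^{T}\bigl(\sum_{i\in\mathcal{I}_{t}}a_{i}^{t}-\bm{x}_{t}^{\intercal}\hat{\bm{a}}^{t}\bigr)=T_{1}+T_{2}+T_{3}$, where $T_{1}=\sum_{t}\sum_{i\in\mathcal{I}_{t}}(a_{i}^{t}-a_{i})$, $T_{2}=\sum_{t}\sum_{i\in\mathcal{I}_{t}}(a_{i}-\hat{a}_{i}^{t})$, and $T_{3}=\sum_{t}\bigl(\sum_{i\in\mathcal{I}_{t}}\hat{a}_{i}^{t}-\bm{x}_{t}^{\intercal}\hat{\bm{a}}^{t}\bigr)$. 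Conditional on $H_{t-1}$, the increment of $T_{1}$ has mean zero because $\{a_{i}^{\tau}\}$ are i.i.d.\ in $\tau$ and independent of $\mathcal{I}_{t}$, while the increment of $T_{3}$ has mean zero because the dependent-rounding procedure guarantees $\Pr[i\in\mathcal{I}_{t}\mid H_{t-1}]=x_{i,t}$ and $\hat{\bm{a}}^{t}$ is $H_{t-1}$-measurable. Both increments are bounded by $2L$ (using the $[0,1]$ bound on rewards and $|\mathcal{I}_{t}|=\bm{1}^{\intercal}\bm{x}_{t}=L$), so two applications of Lemma~\ref{lem:azuma}, each failing with probability at most $\delta/8$, give $|T_{1}|,|T_{3}|=O(L\sqrt{T\ln(1/\delta)})$.

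The middle term $T_{2}$ is controlled deterministically on the event from paragraph~1: the one-sided slack gives $|T_{2}|\le 4\sum_{t}\sum_{i\in\mathcal{I}_{t}}R(\bar{a}_{i}^{t},N_{i}^{t}+1)$. Since $R(\mu,n)\le\sqrt{\gamma/n}+\gamma/n$ and $N_{i}^{t}$ advances by one exactly when $i$ is selected, each arm $i$ contributes at most $\sum_{j=1}^{N_{i}^{T+1}}\bigl(\sqrt{\gamma/j}+\gamma/j\bigr)=O\bigl(\sqrt{\gamma N_{i}^{T+1}}+\gamma\ln T\bigr)$. Summing over arms and applying Cauchy--Schwarz with $\sum_{i}N_{i}^{T+1}=LT$ then yields $|T_{2}|=O\bigl(\sqrt{\gamma KLT}+K\gamma\ln T\bigr)=O\bigl(\sqrt{KLT\ln(KT/\delta)}\bigr)$. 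Combining the three bounds and using $\sqrt{KLT}\le L\sqrt{KT}$ (since $L\le K$) establishes \eqref{eq:a-concentration}; a final union bound over the pointwise event and the Azuma applications for $T_{1},T_{3}$ in both \eqref{eq:a-concentration} and \eqref{eq:g-concentration} keeps the total failure probability at most $\delta$.

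The main obstacle is organizing the martingale structure cleanly: $T_{1}$ isolates the reward randomness given the selection, whereas $T_{3}$ isolates the selection randomness given the UCBs, and these two sources of randomness must be handled with distinct filtrations. Inserting the smoothing term $T_{3}$ is essential, because the alternative is to bound $\sum_{t}\sum_{i}x_{i,t}R(\bar{a}_{i}^{t},N_{i}^{t}+1)$ directly, which does not reduce to the tidy per-arm pull-count sum that makes the $T_{2}$ argument work.
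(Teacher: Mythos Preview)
Your proposal is correct and follows essentially the same route as the paper's proof: the same three-term decomposition $T_{1}+T_{2}+T_{3}$, Azuma--Hoeffding on the two martingale terms, and the deterministic bound on $T_{2}$ via $|\hat{a}_{i}^{t}-a_{i}|\le 4R(\bar{a}_{i}^{t},N_{i}^{t}+1)$ followed by the per-arm pull-count rearrangement and Cauchy--Schwarz with $\sum_{i}N_{i}^{T+1}=LT$. The only cosmetic difference is in how you allocate the failure probability (you group the $a$ and $g$ pointwise events together for a single $\delta/2$, whereas the paper handles the $a$-block and $g$-block separately at $\delta/2$ each), but the totals match.
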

\begin{proof}
See Appendix \ref{apd:proof:reward-bound}.
\end{proof}

From Lemma \ref{lem:reward-bound}, we can obtain the regret and violation bounds for \myucb{}.
\begin{theorem}\label{theo:regret-bound}
  For all $T>0$, let $\gamma=72\ln\frac{8KT}{\delta}$. By running \myucb{}, we have with probability at least $1-\delta$
  that,
\begin{align*}
\textmd{Reg}(T)=&O(L\sqrt{KT\ln\frac{KT}{\delta}}), \\
\textmd{Vio}(T)=&O(L\sqrt{KT\ln\frac{KT}{\delta}}).
\end{align*}
\end{theorem}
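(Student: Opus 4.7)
The plan is to condition throughout on the joint high-probability event $\mathcal{E}$ guaranteed by Lemma \ref{lem:reward-bound}, which holds with probability at least $1-\delta$ and packages the four facts I need: (i) $\hat{\bm{a}}^{t}$ upper bounds $\bm{a}$ coordinatewise, (ii) the analogous UCB property for $\bm{g}$, and (iii)--(iv) the two telescoping concentration bounds \eqref{eq:a-concentration} and \eqref{eq:g-concentration}. Under $\mathcal{E}$ both bounds in the theorem will follow from short manipulations of the constrained optimization \eqref{eq:myucb:optimization}, so the whole proof reduces to two comparisons.

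For the regret bound I would first argue that $\bm{x}^{*}$ is feasible for \eqref{eq:myucb:optimization} at every round: because $\hat{a}_{i}^{t}\ge a_{i}$ under $\mathcal{E}$, we get $\bm{x}^{*\intercal}\hat{\bm{a}}^{t}\ge \bm{x}^{*\intercal}\bm{a}\ge h$, so the ``no feasible solution'' branch of \myucb{} never fires and the optimality of $\bm{x}_{t}$ yields $\bm{x}_{t}^{\intercal}\hat{\bm{g}}^{t}\ge \bm{x}^{*\intercal}\hat{\bm{g}}^{t}\ge \bm{x}^{*\intercal}\bm{g}$, where the last inequality uses \eqref{eq:g-upper}. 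Summing over $t$ and subtracting $\sum_{t}\sum_{i\in\mathcal{I}_{t}}g_{i}^{t}$ from both sides turns $\mathrm{Reg}(T)$ into an upper bound of $\sum_{t}\bigl(\bm{x}_{t}^{\intercal}\hat{\bm{g}}^{t}-\sum_{i\in\mathcal{I}_{t}}g_{i}^{t}\bigr)$, which is exactly what \eqref{eq:g-concentration} controls at the desired $O(L\sqrt{KT\ln(KT/\delta)})$ rate.

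For the violation bound the same feasibility argument gives $\bm{x}_{t}^{\intercal}\hat{\bm{a}}^{t}\ge h$ for every $t$, hence $hT\le \sum_{t}\bm{x}_{t}^{\intercal}\hat{\bm{a}}^{t}$. Plugging into the definition \eqref{eq:violation-def} and using $[\cdot]_{+}\le|\cdot|$ gives $\mathrm{Vio}(T)\le \bigl|\sum_{t}\bm{x}_{t}^{\intercal}\hat{\bm{a}}^{t}-\sum_{t}\sum_{i\in\mathcal{I}_{t}}a_{i}^{t}\bigr|$, and \eqref{eq:a-concentration} delivers the matching $O(L\sqrt{KT\ln(KT/\delta)})$ bound.

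I do not expect a serious obstacle at the level of this theorem, because Lemma \ref{lem:reward-bound} already absorbs the hard concentration work: the sequences $\sum_{i\in\mathcal{I}_{t}}g_{i}^{t}-\bm{x}_{t}^{\intercal}\hat{\bm{g}}^{t}$ and its analogue for $\bm{a}$ are adapted but not i.i.d., so Azuma--Hoeffding on their martingale differences together with a union bound over the $K$ arms and $T$ rounds is what pins down the $\sqrt{\ln(KT/\delta)}$ factor and motivates the choice $\gamma=72\ln(8KT/\delta)$. The only subtle point I would double-check is that conditioning on $\mathcal{E}$ does not break the argument that produced $\mathcal{E}$ in the first place, i.e., that all four conclusions of Lemma \ref{lem:reward-bound} can be bundled into a single union bound whose total failure probability is at most $\delta$; once that bookkeeping is pinned down, the theorem is essentially a one-line consequence of the two comparisons above.
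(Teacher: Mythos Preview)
Your proposal is correct and matches the paper's own proof essentially line for line: condition on the event of Lemma~\ref{lem:reward-bound}, use \eqref{eq:a-upper} to certify that $\bm{x}^{*}$ is feasible for \eqref{eq:myucb:optimization} at every round, chain $\bm{x}_{t}^{\intercal}\hat{\bm{g}}^{t}\ge\bm{x}^{*\intercal}\hat{\bm{g}}^{t}\ge\bm{x}^{*\intercal}\bm{g}$ via \eqref{eq:g-upper} and finish the regret bound with \eqref{eq:g-concentration}, then use $\bm{x}_{t}^{\intercal}\hat{\bm{a}}^{t}\ge h$ together with \eqref{eq:a-concentration} for the violation bound. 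Your extra remarks about $[\cdot]_{+}\le|\cdot|$ and the union-bound bookkeeping are fine and not needed beyond what Lemma~\ref{lem:reward-bound} already packages.
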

\begin{proof}
 We bound the regret and violation using
  \eqref{eq:a-upper} to \eqref{eq:g-concentration}, which were shown to hold with
  probability at least $1-\delta$ in Lemma \ref{lem:reward-bound}. 
  
  From \eqref{eq:a-upper} we know for all $t$, $\bm{x}^{*}$ is a feasible
  solution of the optimization problem \eqref{eq:myucb:optimization}, i.e.,
  ${\bm{x}^{*}}^{\intercal}\hat{\bm{a}}^{t}\ge {\bm{x}^{*}}^{\intercal}\bm{a}\ge
  h$. 
  Then, for all $1\le t\le T$, we have,
\begin{equation}
\bm{x}_{t}^{\intercal}\hat{\bm{g}}^{t}\ge{\bm{x}^{*}}^{\intercal}\hat{\bm{g}}^{t}\ge {\bm{x}^{*}}^{\intercal}\bm{g}, \label{eq:reward-ucb}
\end{equation}
where the last inequality follows from \eqref{eq:g-upper}. 
Combining \eqref{eq:g-concentration} and \eqref{eq:reward-ucb}, we have
\begin{equation*}
\textmd{Reg}(T)=O(L\sqrt{KT\ln\frac{KT}{\delta}}).
\end{equation*}
On the other hand, since for all $t$, \eqref{eq:myucb:optimization} has a
feasible solution $\bm{x}^{*}$, we know
$\bm{x}_{t}^{\intercal}\hat{\bm{a}}^{t}\ge h,1\le t\le T$. Then with
\eqref{eq:a-concentration}, we can get
\begin{equation*}
\textmd{Vio}(T)=O(L\sqrt{KT\ln\frac{KT}{\delta}}).
\end{equation*}
This completes the proof. 
\end{proof}


\begin{figure*}[!t]
  \centering
  \subfigure[Cumulative regret]{
  \includegraphics[width=0.232\textwidth]{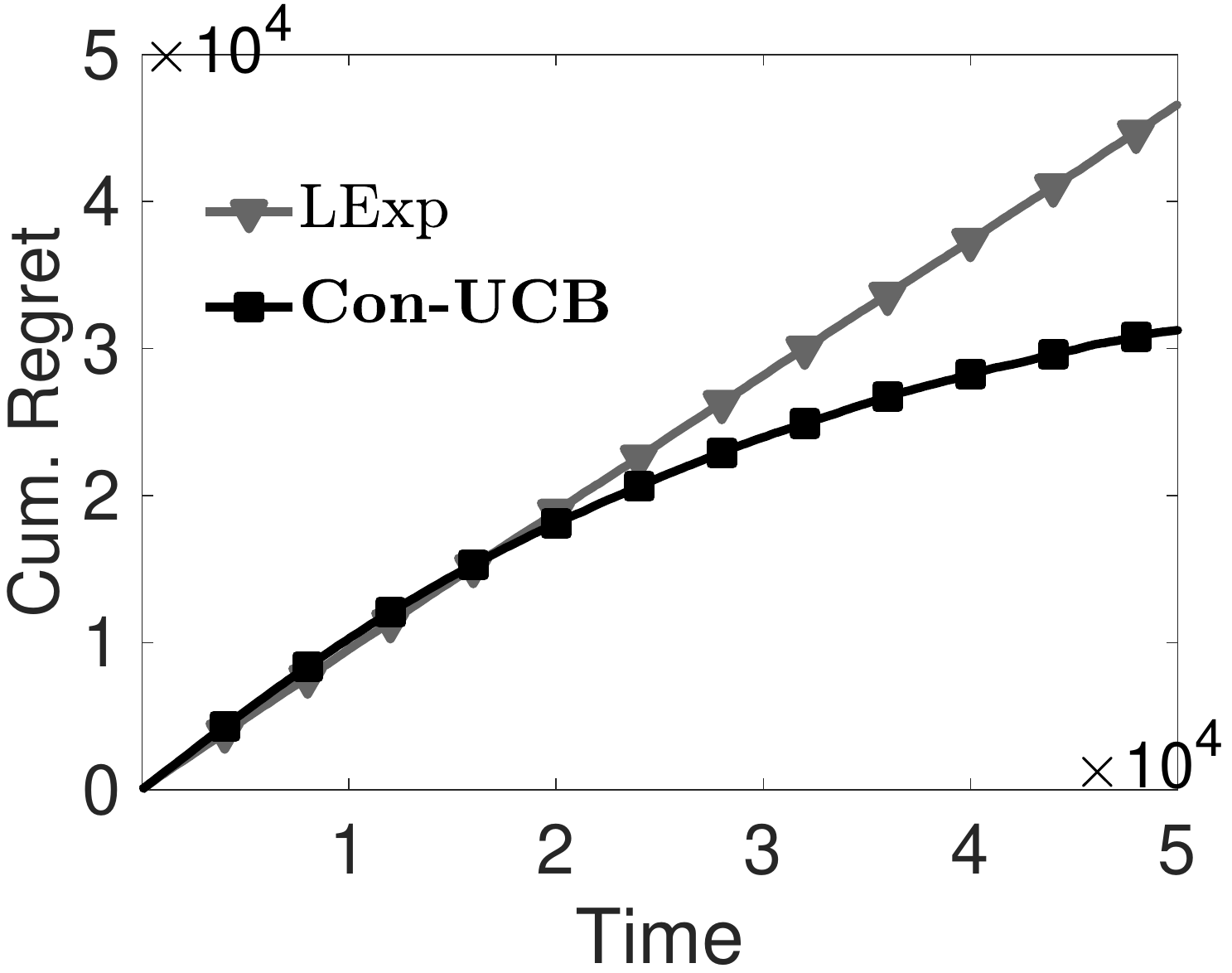}
 \label{fig:coupon:cumregret}
  }
  \subfigure[Cumulative violation]{
  \includegraphics[width=0.232\textwidth]{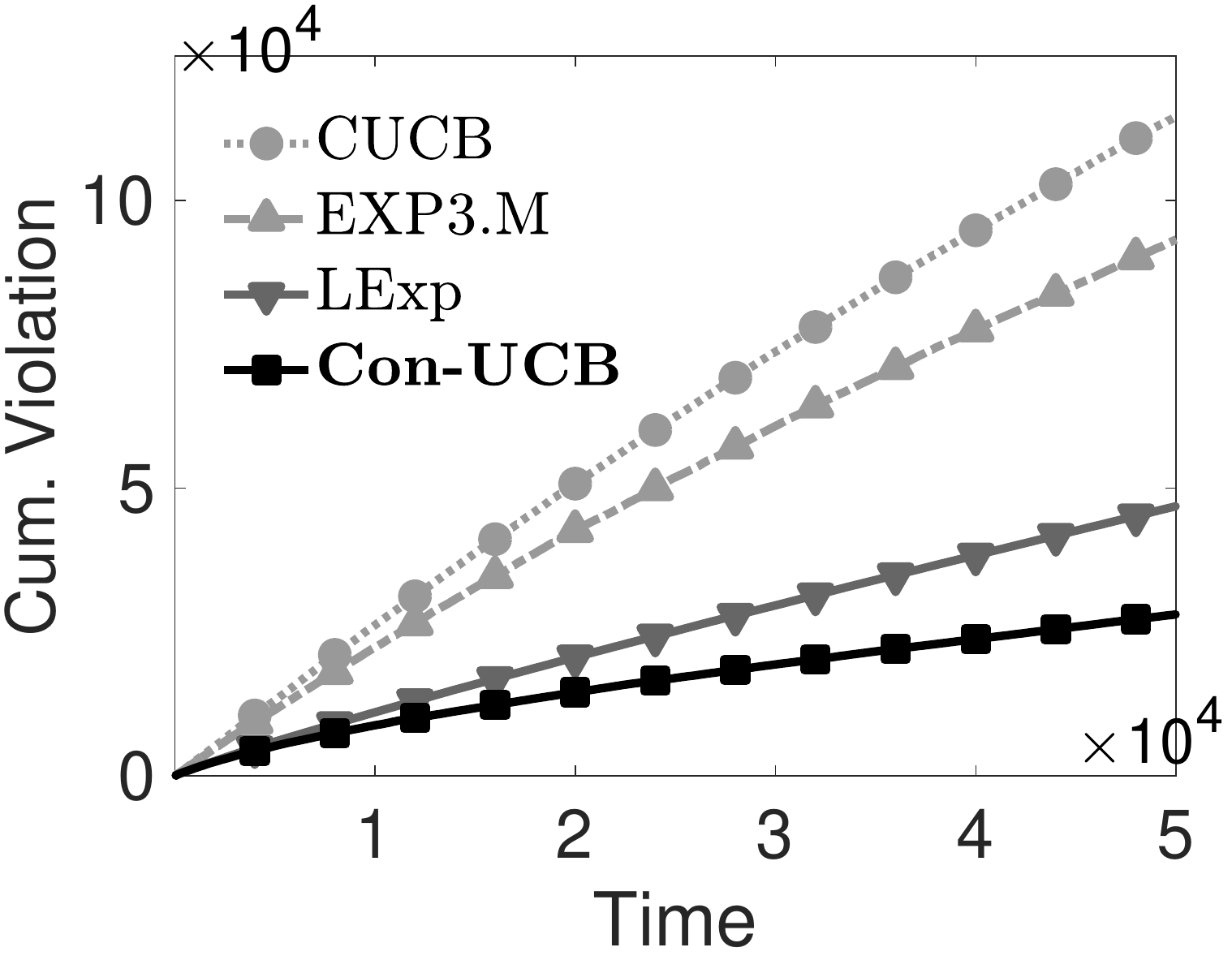}
 \label{fig:coupon:cumviolation}
  }
  \subfigure[Cum. compound reward]{
  \includegraphics[width=0.232\textwidth]{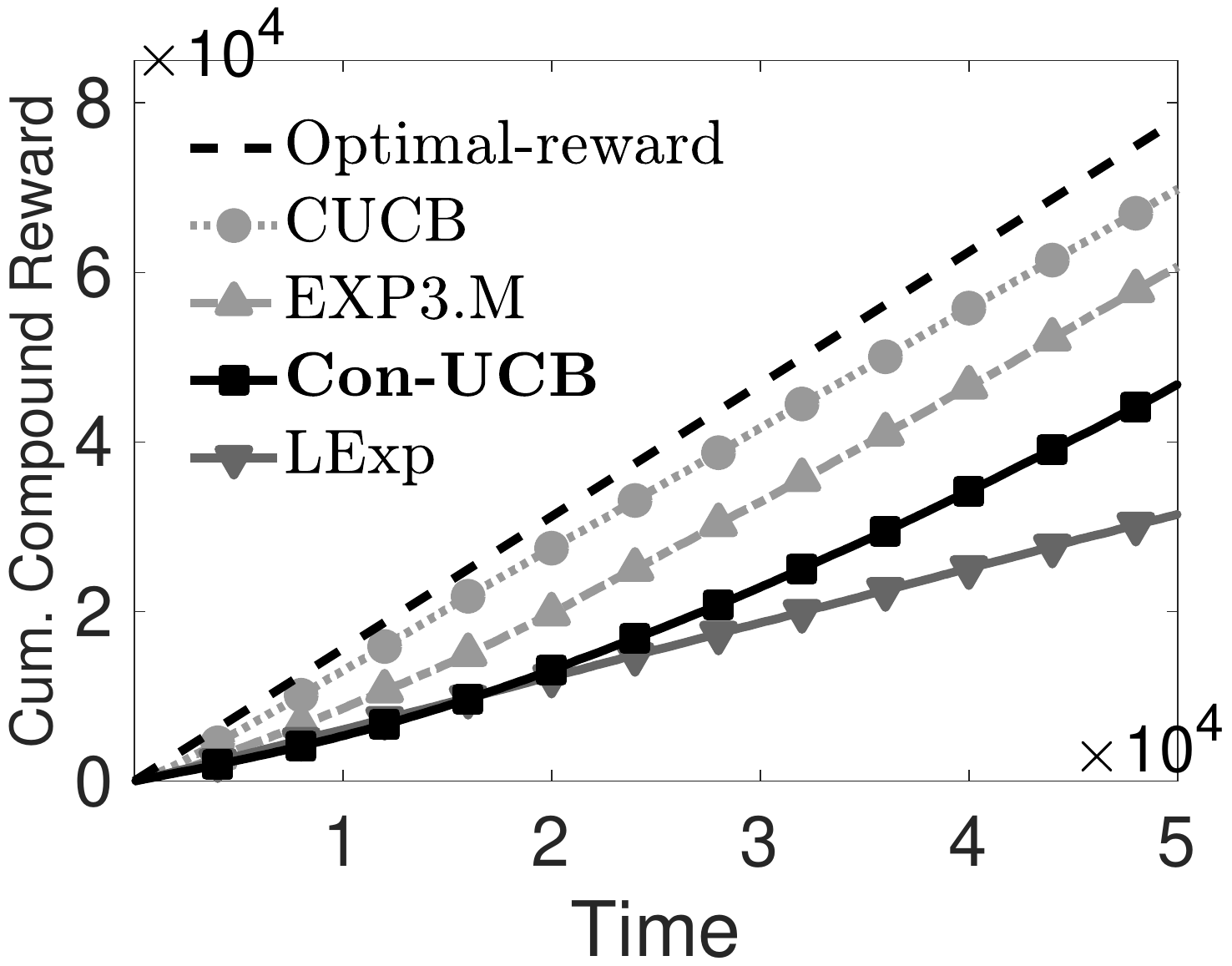}
 \label{fig:coupon:cumreward}
  }
  \subfigure[Reward/Violation ratio]{
  \includegraphics[width=0.232\textwidth]{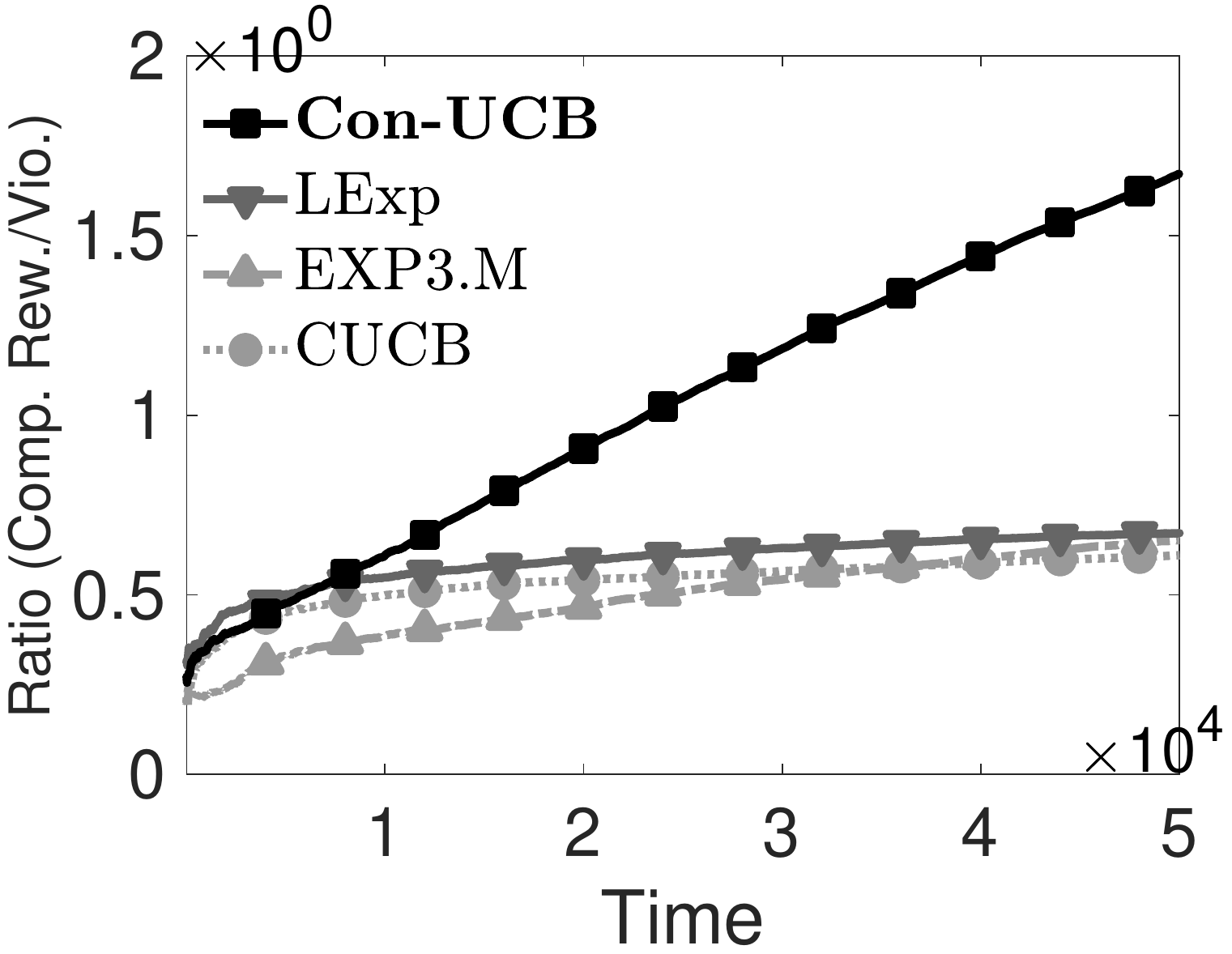}
 \label{fig:coupon:ratio}
}
\vspace{-0.1in}
\caption{Experiment results on the \coupon{} dataset. $K=271, L=15, h=4, \delta=0.01$.}
 \label{fig:coupon}
\end{figure*}
\vspace{-0.05in}

\begin{figure*}[!t]
  \centering
  \subfigure[Cumulative regret]{
  \includegraphics[width=0.232\textwidth]{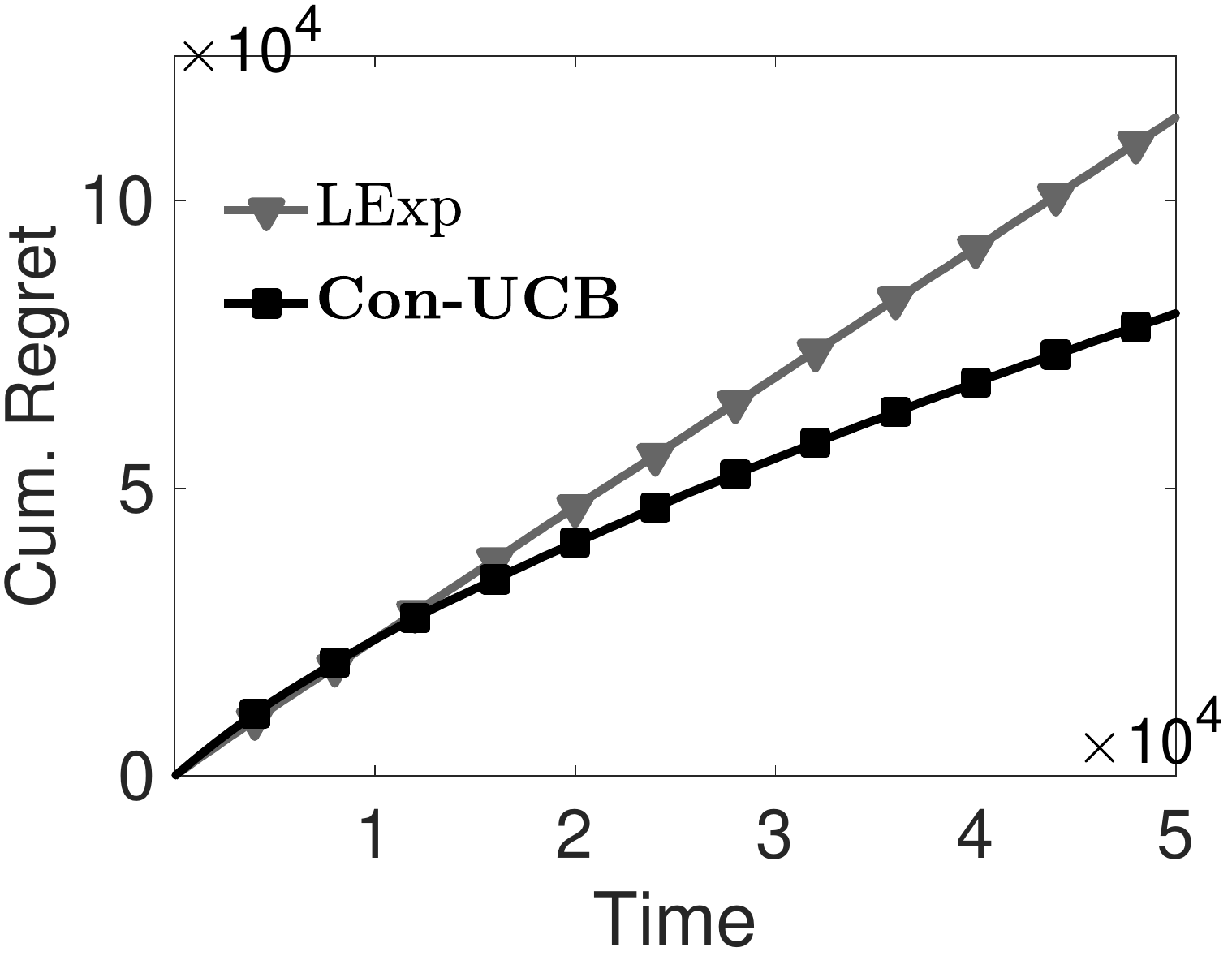}
 \label{fig:ad:cumregret}
  }
  \subfigure[Cumulative violation]{
  \includegraphics[width=0.232\textwidth]{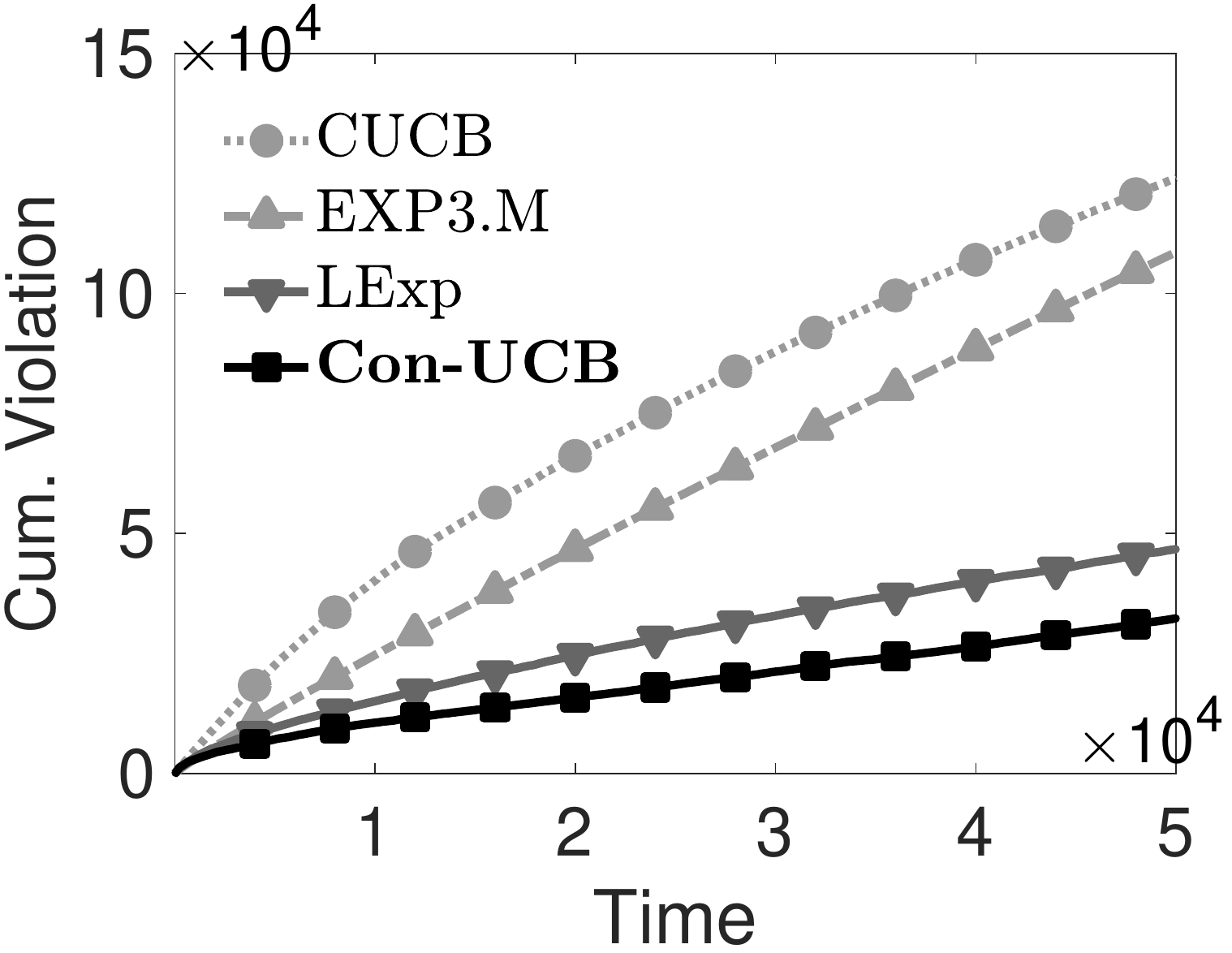}
 \label{fig:ad:cumviolation}
  }
  \subfigure[Cum. compound reward]{
  \includegraphics[width=0.232\textwidth]{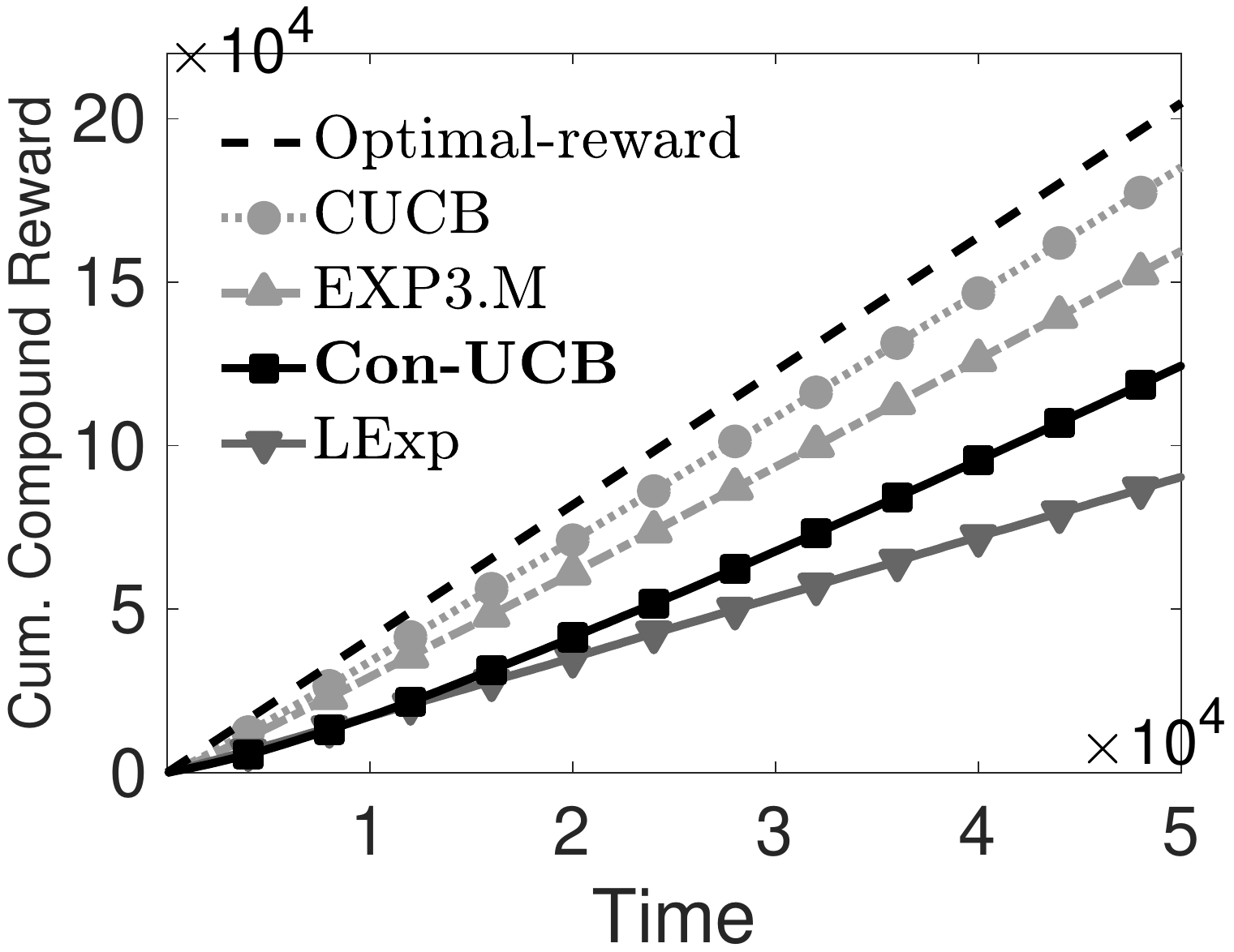}
 \label{fig:ad:cumreward}
  }
  \subfigure[Reward/Violation ratio]{
  \includegraphics[width=0.232\textwidth]{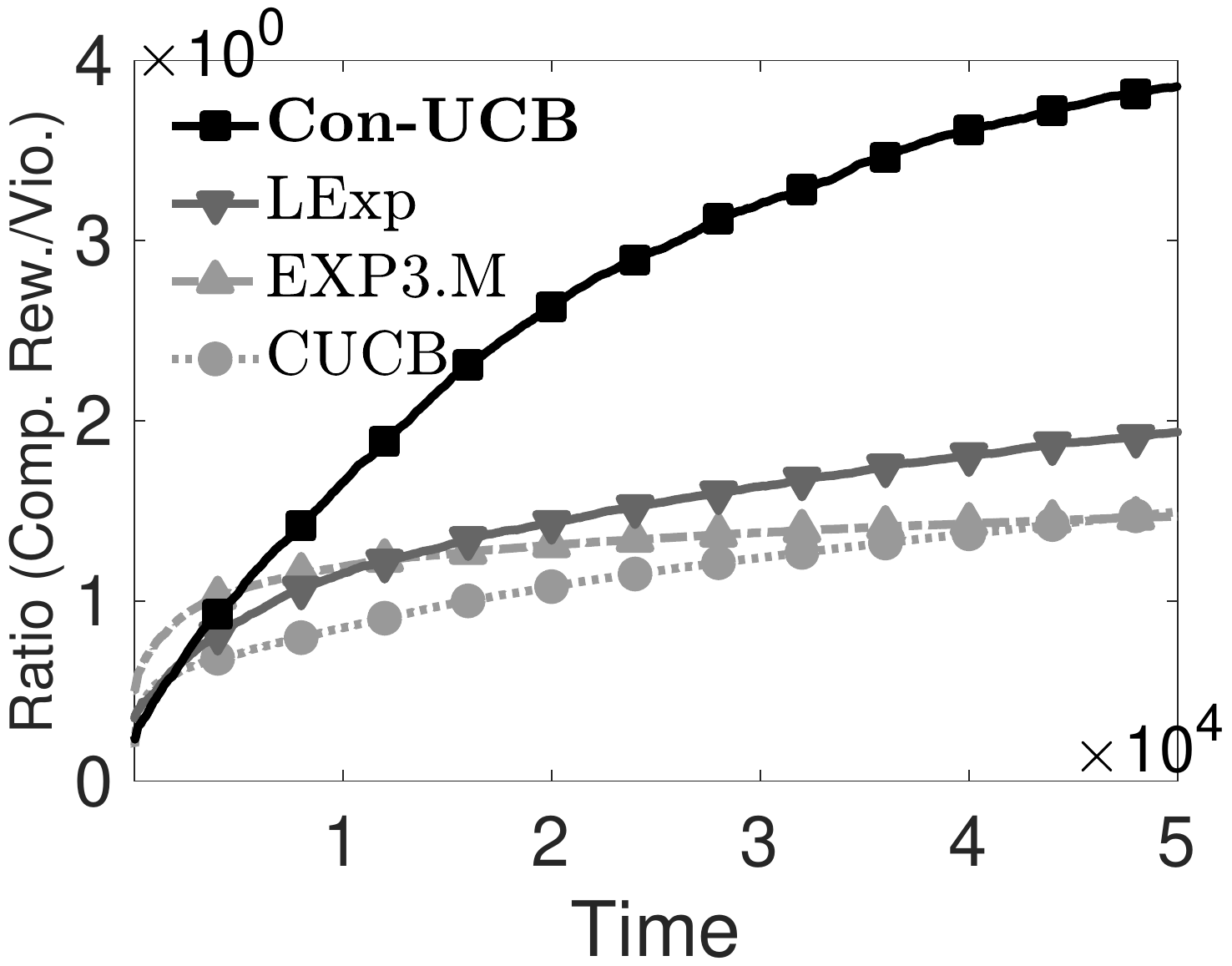}
  \label{fig:ad:ratio}
}
\vspace{-0.1in}
\caption{Experiment results on the \adclick{} dataset. $K=225, L=20, h=10, \delta=0.02$.}
 \label{fig:ad}
\end{figure*}
\vspace{-0.05in}

\begin{figure*}[!t]
  \centering
  \subfigure[Cumulative regret]{
  \includegraphics[width=0.232\textwidth]{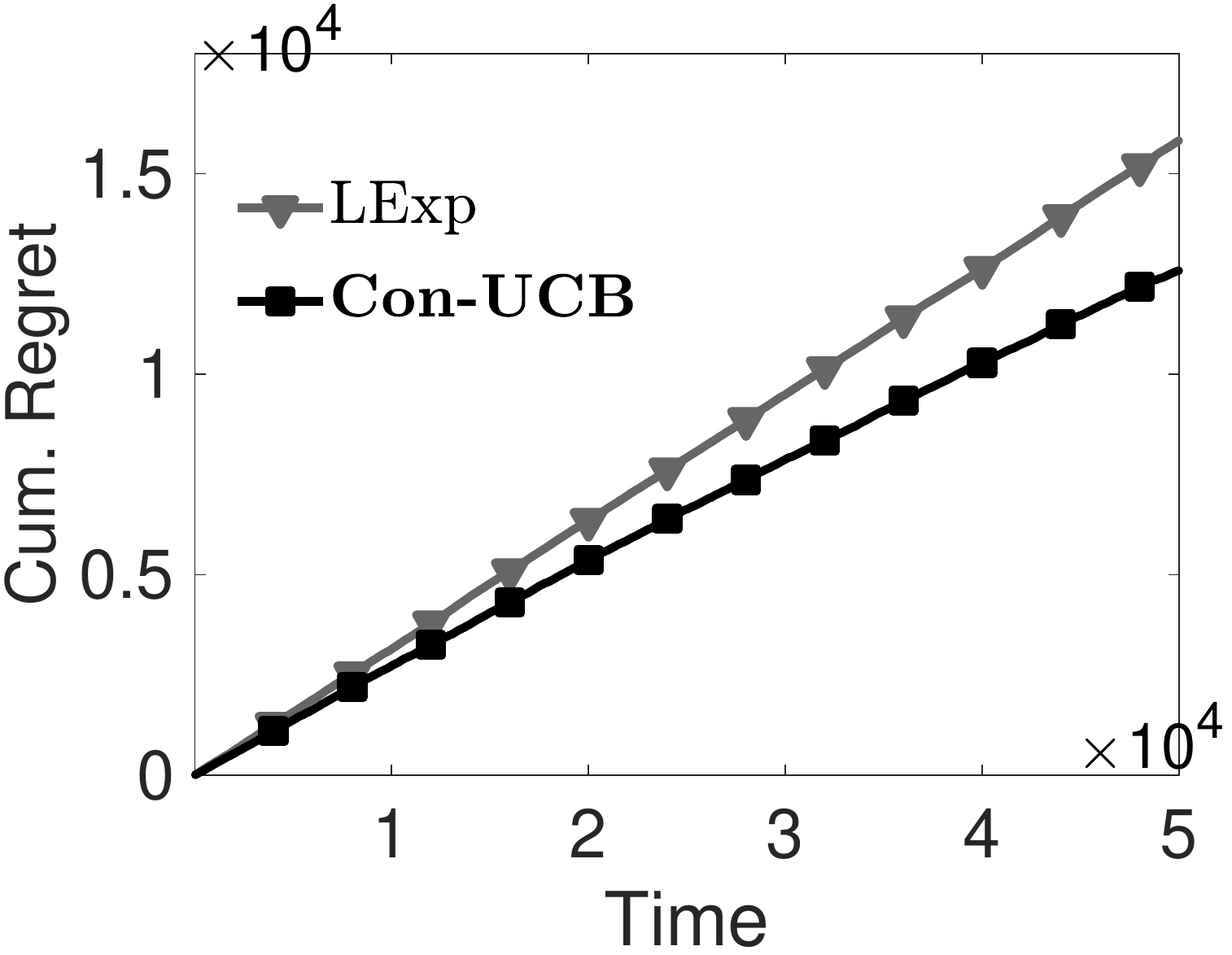}
 \label{fig:edx:cumregret}
  }
  \subfigure[Cumulative violation]{
  \includegraphics[width=0.232\textwidth]{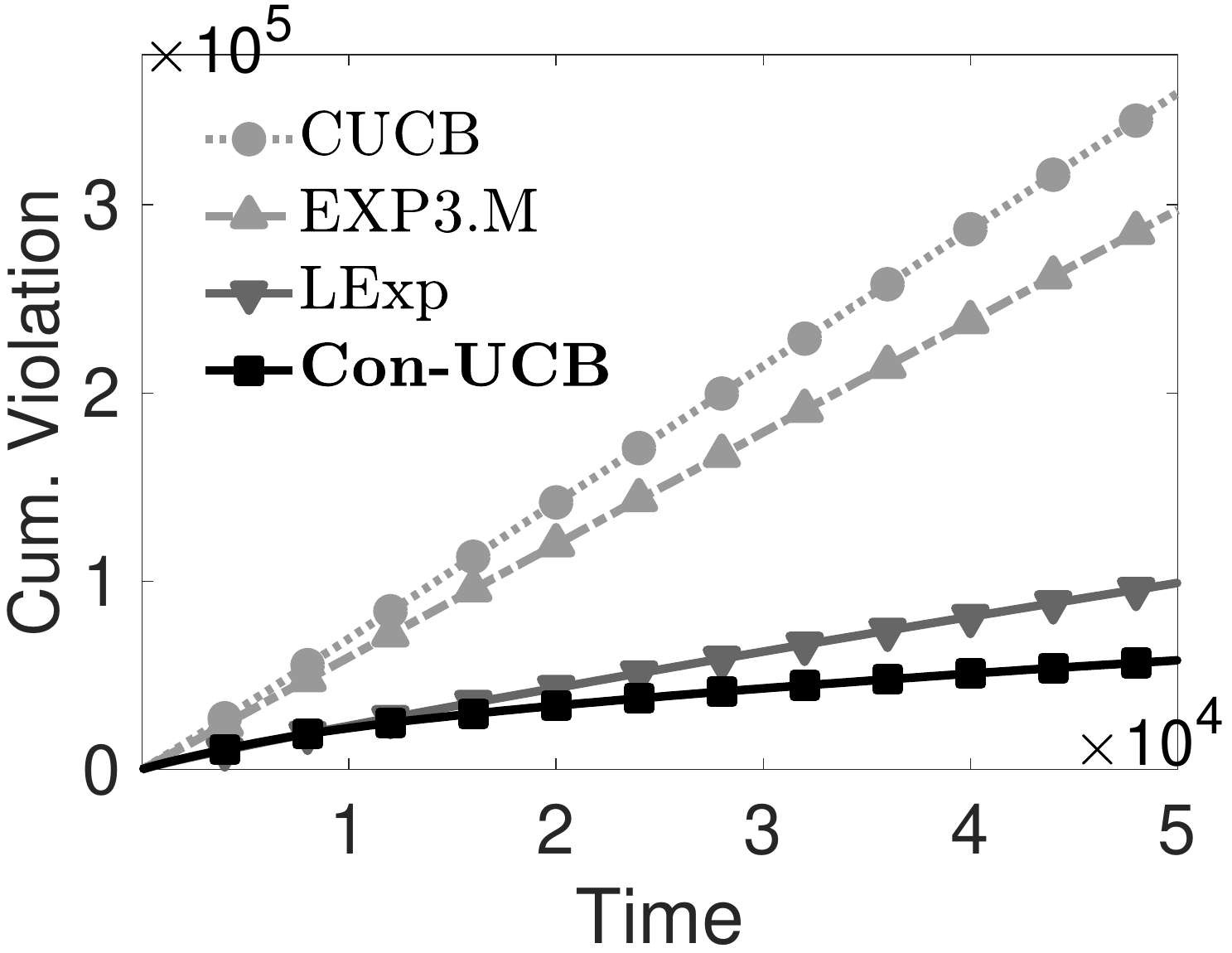}
 \label{fig:edx:cumviolation}
  }
  \subfigure[Cum. compound reward]{
  \includegraphics[width=0.232\textwidth]{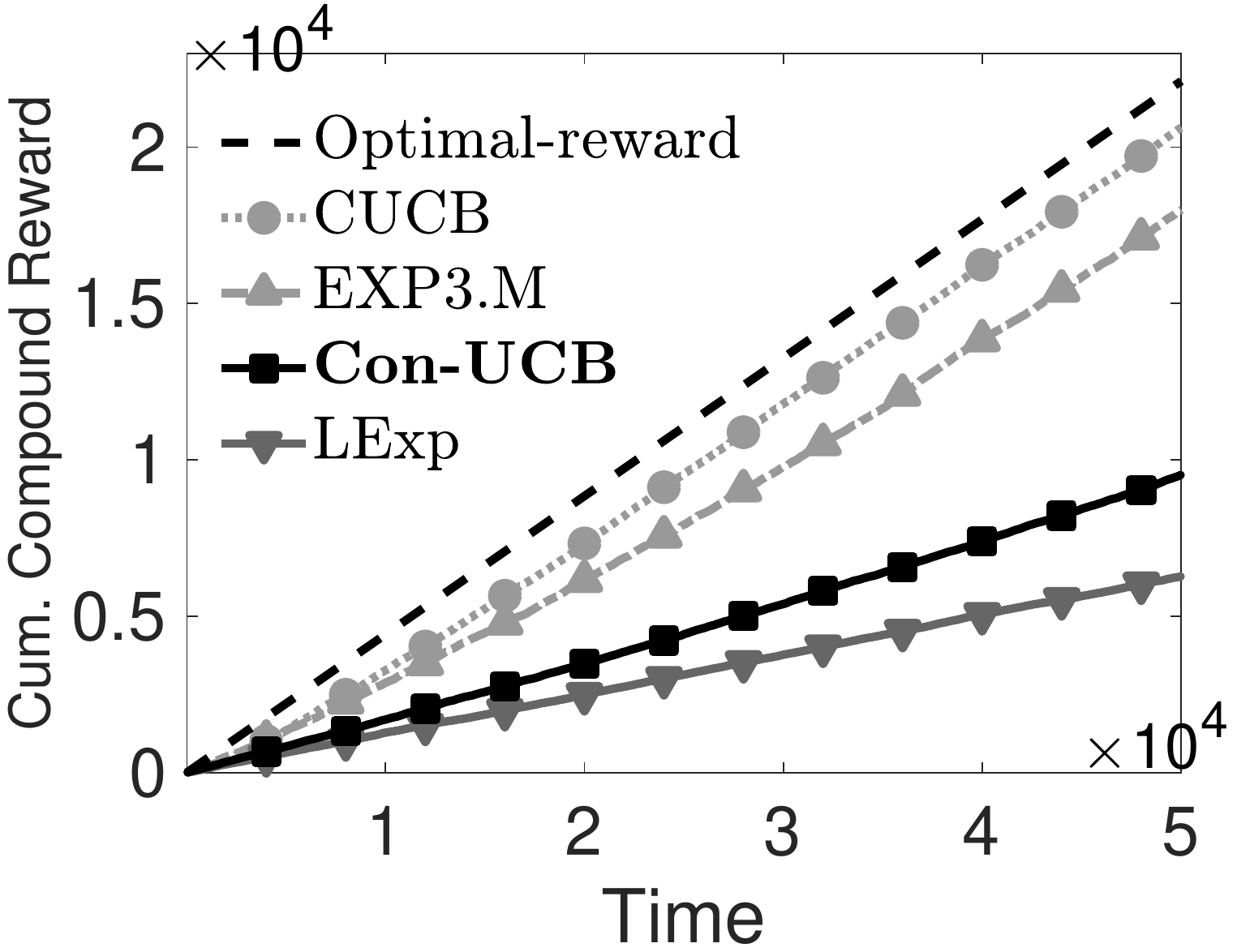}
 \label{fig:edx:cumreward}
  }
 \subfigure[Reward/Violation ratio]{
 \includegraphics[width=0.232\textwidth]{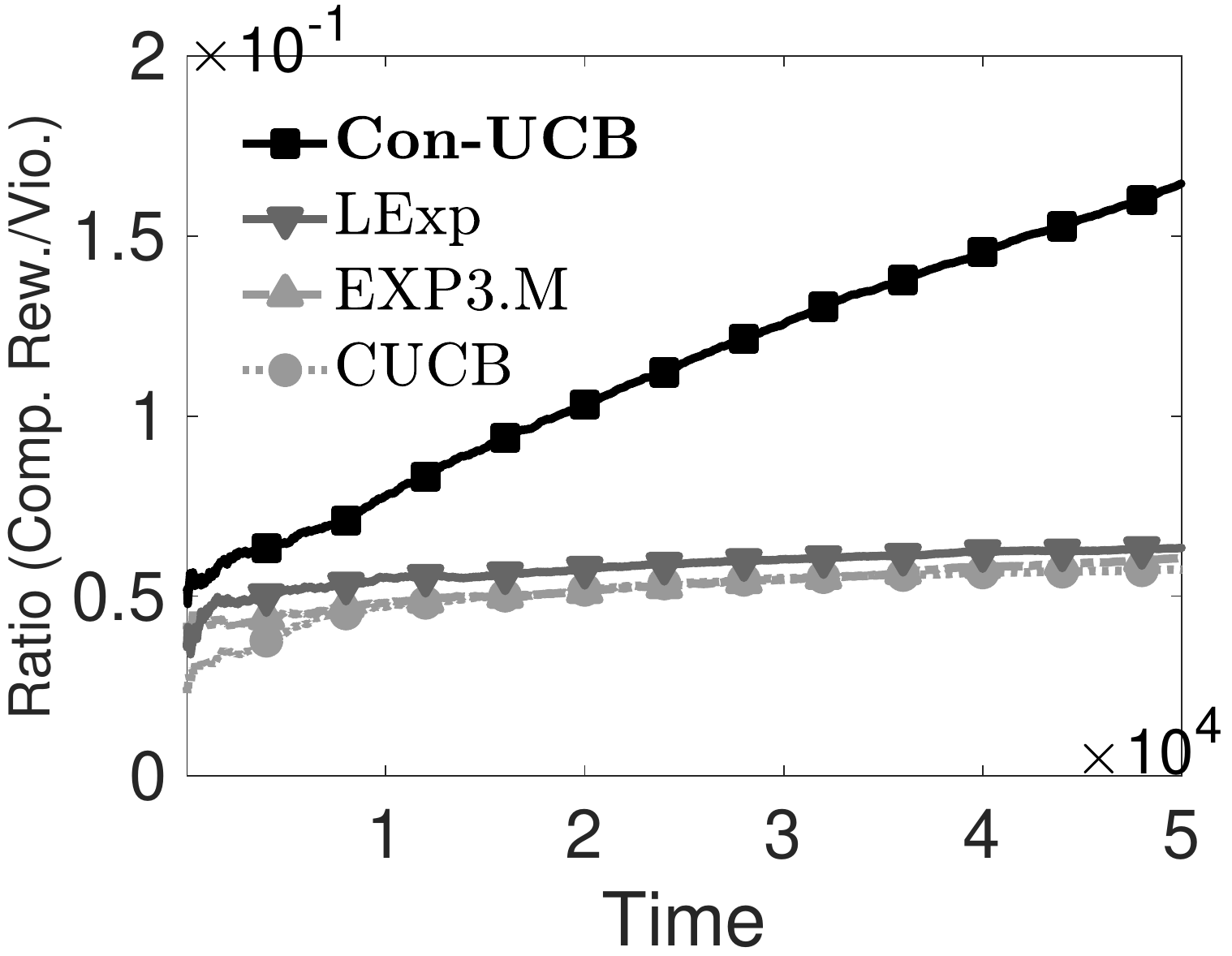}
 \label{fig:edx:ratio}
  }
\vspace{-0.1in}
\caption{Experiment results on the \edxcourse{} dataset. $K=290, L=60, h=10, \delta=0.05$.}
 \label{fig:edx}
\end{figure*}

\section{Experiments}
\label{sec:experiments}
We conduct experiments on three real-world datasets to evaluate the performance
of \myucb{}. 
Two datasets, \coupon{}~\cite{kaggle2016coupon} and
\adclick{}~\cite{kaggle2015avito}, with $271$ coupons and $225$ ads
respectively, are shown to have a two-level feedback structure
in~\cite{cai2017}. 
In particular, for each coupon in \coupon{}, a user who clicks the link
to the coupon can decide whether to purchase that coupon; for each ad in
\adclick{}, a user who clicks the link to the ad can decide whether to request the 
corresponding seller's phone number.
Thus, for \coupon{} (\adclick{}), the first-level feedback is the CTR of
each coupon (the CTR of each ad) and the second-level feedback is the purchase
rate of each coupon (the phone request rate of each ad). 
The third dataset, \edxcourse{}, is extracted from the data on $290$ Harvard and
MIT edX online courses~\cite{chuang2016harvardx}. 
In particular, for the $290$ online courses, we obtain course participation
rates by normalizing the numbers of participants using min-max scaling and treat
the course participation rates as the first-level feedback; we calculate course
certification rates by dividing the numbers of certified participants by the
numbers of participants, and treat the course certification rates as the second-level feedback. 

We treat the coupons, ads, and courses as different sets of arms. 
To simulate the real-time two-level feedback of the coupons, ads, and courses,
we generate the first-level reward of each arm (coupon, ad, and course) using a
Bernoulli variable with mean taken from the first-level feedback (coupon CTR, ad
CTR, and course participation rate) in the three datasets, and generate the
second-level reward of each arm using another independent Bernoulli variable with
mean taken from the second-level feedback (coupon purchase rate, ad phone request
rate, and course certification rate). 
 
For comparison purposes, we implement three state-of-the-art bandit algorithms
that can select multiple arms $(L\ge 1)$ at each round as
baselines, i.e., 
\textsc{CUCB}~\cite{chen2013combinatorial},
\textsc{Exp3.M}~\cite{uchiya2010algorithms} and \lexp{}~\cite{cai2017}. 
Specifically, \textsc{CUCB} selects the top-$L$ arms with the $L$ highest UCB
indices $\bar{g}_{i}^{t} + \sqrt{3\ln t/(2 N_{i}^{t})}$. 
\textsc{Exp3.M} selects $L$ arms using exponential weights on the compound
rewards of $K$ arms, and \lexp{} selects arms using exponential weights based on
the Lagrangian function of reward and violation of $K$ arms.

For the three datasets, we run the three algorithms together with \myucb{} for
$50,000$ rounds with parameter settings as shown in
Figure~\ref{fig:coupon}--\ref{fig:edx}, respectively. 
In particular, the parameters of \textsc{Exp3.M} and \lexp{} are set in accordance
with Corollary 1 of \cite{uchiya2010algorithms} and Theorem 1 of
\cite{cai2017}, respectively.
We compare the cumulative regrets $t{\bm{x}^{*}}^{\intercal}\bm{g}-\sum_{\tau=1}^t\sum_{i\in
  \mathcal{I}_{\tau}}g_i^{\tau}$ of \lexp{} and \myucb{} at each round $t$,
where the optimal policy $\bm{x}^{*}$ is computed from the means of the
two-level feedback taken from each datatset.
(Note that the regrets of \textsc{CUCB} and \textsc{Exp3.M} are not considered
since they both have an unconstrained optimal policy, and therefore have
different regret definitions from \lexp{} and \myucb{}.)
We also compare the cumulative violations $\sum_{\tau=1}^t(h-\sum_{i\in
  \mathcal{I}_{\tau}}a_i^{\tau})_+$ and the cumulative compound rewards
$\sum_{\tau=1}^t\sum_{i\in \mathcal{I}_{\tau}}g_i^{\tau}$ of the four
algorithms. 
To put things into perspective, we compare the ratios between the cumulative
rewards and the cumulative violations of all the algorithms. 
Such ratios show how much reward an algorithm can gain for each unit violation
it has made.

The experiment results are averaged over $200$ runs of each algorithm and
illustrated in Figure~\ref{fig:coupon}--\ref{fig:edx}. 
Figure~\ref{fig:coupon:cumregret} shows that the cumulative regret of \myucb{}
is much lower than that of \lexp{} on the \coupon{} dataset. 
This shows that \myucb{} can reduce the regret significantly by selecting arms
using UCB-based optimization instead of exponential weights as in \lexp{}.
Figure~\ref{fig:coupon:cumviolation} and Figure~\ref{fig:coupon:cumreward} show
the cumulative violations and the cumulative rewards of the four algorithms.
In particular, the \emph{Optimal-reward} in Figure~\ref{fig:coupon:cumreward}
shows the cumulative reward $t{\bm{x}^{*}}^{\intercal}\bm{g}$ of the optimal policy $\bm{x}^{*}$
at each round $t$. 
As shown in Figure~\ref{fig:coupon:cumreward}, \ucb{} and \expm{} have larger
cumulative rewards than \myucb{} and \lexp{}, as both \ucb{} and \expm{} neglect
the threshold constraint and thereby blindly selecting arms that maximize the
cumulative rewards. 
Therefore, both \ucb{} and \expm{} incur huge cumulative violations as shown in
Figure~\ref{fig:coupon:cumviolation}. 
Moreover,
\myucb{} has a larger cumulative reward and a lower cumulative violation than
\lexp{}. 
This matches our theoretical results that \myucb{} has smaller regret as well as
violation bounds than \lexp{}.
Figure~\ref{fig:coupon:ratio} shows that \myucb{} achieves the largest
reward/violation ratios among the four algorithms. 
This means that \myucb{} achieves the best tradeoff between rewards and violations
and accumulates most reward for each unit violation it incurs.

We have similar experiment results on \adclick{} and \edxcourse{} to those on
\coupon{}. 
As shown in Figure~\ref{fig:ad} and Figure~\ref{fig:edx}, \myucb{} achieves
lower cumulative regret and higher cumulative rewards than \lexp{}, and has the
lowest cumulative violations and largest reward/violation ratios among all
algorithms. 
Due to space limit, we omit the \blue{details}.

In summary, our experiment results are consistent with our theoretical analysis
and demonstrate the effectiveness of our \myucb{} algorithm in selecting arms
with high cumulative rewards as well as low cumulative violations, thus achieving 
a good tradeoff between the reward and the violation.

\vspace{-0.05in}
\section{Conclusion}
\label{sec:conclusion}
\noindent In this paper, we consider the web link selection problem
with multi-level feedback. 
We formulate it as a constrained multiple-play stochastic multi-armed bandit
problem with multi-level reward. 
We design an efficient algorithm \myucb{} for solving the problem, and prove
that for any given allowed failure probability $\delta\in(0,1)$, with
probability at least $1-\delta$, \myucb{} guarantees $O(\sqrt{T\ln
  \frac{T}{\delta}})$ regret and violation bounds. 
We conduct extensive experiments on three real-world datasets to compare our
\myucb{} algorithm with state-of-the-art context-free bandit algorithms. 
Experiment results show that  \myucb{}  balances regret and
violation better than the other algorithms and outperforms \lexp{} in both
regret and violation.

\vspace{-0.05in}
\section*{Acknowledgment}
This work is supported in part by the National Natural Science Foundation of China Grants 61672316, 61303195, the Tsinghua Initiative Research Grant, and the China Youth 1000-Talent Grant.

\bibliographystyle{named}
\bibliography{bandit-ref}
\appendix


\section{Proof of Lemma \ref{lem:reward-bound}} \label{apd:proof:reward-bound}
\begin{proof}
  We first show that \eqref{eq:a-upper} and \eqref{eq:a-concentration} hold with
  probability at least $1-\frac{\delta}{2}$. Notice that
  $\gamma=72\ln\frac{8KT}{\delta}\ge1$. From Lemma \ref{lem:ucb}, by taking a
  union bound over all $i\in\mathcal{K}$ and all $t$, we obtain that for all $1\le
  i\le K$ and all $1\le t\le T$, with probability at least
  $1-2KTe^{-\frac{1}{72}\gamma}=1-\frac{\delta}{4}$,
\begin{equation}
|\bar{a}_{i}^{t}-a_{i}|\le 2R(\bar{a}_{i}^{t},N_{i}^{t}+1), \label{eq:ucb}
\end{equation}
which means 
\begin{equation}
a_{i}\le\bar{a}_{i}^{t}+2R(\bar{a}_{i}^{t},N_{i}^{t}+1). \label{eq:ucb-right}
\end{equation}
 Recall that $\hat{a}_{i}^{t}=\min\{1,\bar{a}_{i}^{t}+2R(\bar{a}_{i}^{t},N_{i}^{t}+1)\}$. 
 Together with \eqref{eq:ucb-right}, we see that \eqref{eq:a-upper} holds.

To prove \eqref{eq:a-concentration}, we define a series of random variables
$Z_{t},1\le t \le T$ as
\begin{equation*}
Z_{t}=\sum_{i\in\mathcal{I}_{t}}a_{i}^{t}-\sum_{i\in\mathcal{I}_{t}}a_{i}.
\end{equation*}
We know $\Ex\{Z_{t}|H_{t-1}\}=0$ and $|Z_{t}|\le L$. Recall that $H_{t}$ denotes
the historical information of chosen actions and observations up to time $t$.
Thus, by Lemma \ref{lem:azuma}, we get, with probability at least
$1-\frac{\delta}{8}$,
\begin{equation}
|\sum_{t=1}^{T}(\sum_{i\in\mathcal{I}_{t}}a_{i}^{t}-\sum_{i\in\mathcal{I}_{t}}a_{i})|\le L\sqrt{2T\ln\frac{16}{\delta}}. \label{eq:a-divergence-rv}
\end{equation}
Similarly, with probability at least $1-\frac{\delta}{8}$,
\begin{equation}
|\sum_{t=1}^{T}(\sum_{i\in\mathcal{I}_{t}}\hat{a}_{i}^{t}-\bm{x}_{t}^{\intercal}\hat{\bm{a}}^{t})|\le L\sqrt{2T\ln\frac{16}{\delta}}. \label{eq:a-divergence-rs}
\end{equation}
Next we bound $|\sum_{t=1}^{T}\sum_{i\in\mathcal{I}_{t}}(\hat{a}_{i}^{t}-a_{i})|$. Notice that \eqref{eq:ucb} also implies that for all $i$ and $t$, 
\begin{equation*}
|\hat{a}_{i}^{t}-a_{i}|\le 4R(\bar{a}_{i}^{t},N_{i}^{t}+1).
\end{equation*}
Let $\tau(i,n)$ denote the time that arm $i$ is played for the $n$th time. We have
\begin{align}
&|\sum_{t=1}^{T}\sum_{i\in\mathcal{I}_{t}}(\hat{a}_{i}^{t}-a_{i})| \le \sum_{t=1}^{T}\sum_{i\in\mathcal{I}_{t}}4R(\bar{a}_{i}^{t},N_{i}^{t}+1) \nonumber\\
&= \sum_{i=1}^{K}\sum_{n=1}^{N_{i}^{T+1}}4R(\bar{a}_{i}^{\tau(i,n)},n) \nonumber\\
&\le\sum_{i=1}^{K}\sum_{n=1}^{N_{i}^{T+1}}4(\sqrt{\frac{\gamma}{n}}+\frac{\gamma}{n}) \nonumber\\
&=O(\sum_{i=1}^{K}(\sqrt{\gamma N_{i}^{T+1}}+\gamma\ln N_{i}^{T+1})) \nonumber\\
&\le O(\sqrt{K}\sqrt{\sum_{i=1}^{K}\gamma N_{i}^{T+1}}+K\gamma\ln T) \label{eq:proof:a-divergence-ucb}\\
&=O(L\sqrt{KT\ln\frac{KT}{\delta}}), \label{eq:a-divergence-ucb}
\end{align}
where \eqref{eq:proof:a-divergence-ucb} follows from the Cauchy-Schwarz
inequality and \eqref{eq:a-divergence-ucb} follows from the fact that
$\sum_{i=1}^{K}N_{i}^{T+1}=LT$. Thus, \eqref{eq:a-divergence-rv},
\eqref{eq:a-divergence-rs} and \eqref{eq:a-divergence-ucb} together give
\begin{equation*}
|\sum_{t=1}^{T}(\sum_{i\in\mathcal{I}_{t}}a_{i}^{t}-\bm{x}_{t}^{\intercal}\hat{\bm{a}}^{t})|=O(L\sqrt{KT\ln\frac{KT}{\delta}}).
\end{equation*}

Repeating the same analysis, we can show that \eqref{eq:g-upper} and
  \eqref{eq:g-concentration} also hold with probability at least
  $1-\frac{\delta}{2}$.
Then, we can prove the lemma using the union bound.
\end{proof}


\end{document}